\newtheorem{assumption}[]{Assumption}
\newtheorem{definition}{Definition}[]
\newtheorem{theorem}{Theorem}[]
\theoremstyle{remark}
\newtheorem*{remark}{Remark}
\DeclareMathOperator{\E}{\mathbb{E}}
\begin{document}

\title{\LARGE \textbf{
On the Convergence of
Decentralized Federated Learning Under Imperfect Information Sharing}}
\author{
Vishnu Pandi Chellapandi$^*$, Antesh Upadhyay$^*$, Abolfazl Hashemi, and Stanislaw H \.{Z}ak
\thanks{V. P. Chellapandi, A. Upadhyay, A. Hashemi, and S. H. \.{Z}ak are with College of Engineering, Purdue University, West Lafayette, IN 47907, USA. Emails: {\tt\small \{cvp,aantesh,abolfazl,zak\}@purdue.edu}}
\thanks{$^*$ These authors contributed equally to the manuscript}
}

\maketitle

\begin{abstract}
Decentralized learning and optimization is a central problem in control that encompasses several existing and emerging applications, such as federated learning. While there exists a vast literature on this topic and most methods centered around the celebrated average-consensus paradigm, less attention has been devoted to scenarios where the communication between the agents may be imperfect.
To this end, this paper presents three different algorithms of Decentralized Federated Learning (DFL) in the presence of imperfect information sharing modeled as noisy communication channels. The first algorithm, Federated Noisy Decentralized Learning (FedNDL1), comes from the literature, where the noise is added to their parameters to simulate the scenario of the presence of noisy communication channels. This algorithm shares parameters to form a consensus with the clients based on a communication graph topology through a noisy communication channel. The proposed second algorithm (FedNDL2) is similar to the first algorithm but with added noise to the parameters, and it performs the gossip averaging before the gradient optimization. The proposed third algorithm 
 (FedNDL3), on the other hand, shares the gradients through noisy communication channels instead of the parameters. Theoretical and experimental results demonstrate that under imperfect information sharing, the third scheme that mixes gradients is more robust in the presence of a noisy channel compared with the algorithms from the literature that mix the parameters. 

\end{abstract}

\begin{keywords}
Optimization algorithms, Distributed control, Stochastic systems, Numerical algorithms
\end{keywords}

\section{Introduction}
Due to technological advances, a massive amount of data is being generated from devices like computers, mobiles, smart watches, and vehicles which are collected in centralized data centers and subsequently used for training machine learning models. However, challenges such as limited communication bandwidth, memory constraints, and privacy concerns make centralized learning not reliable and scalable. Hence, learning paradigms that promote a secure and privacy-preserving environment must be created. This led to an advancement in decentralized optimization algorithms \cite{nedic2009distributed,koloskova2020unified} such as the Decentralized Federated Learning (DFL) \cite{hashemi2021benefits} and Federated Learning (FL)\cite{mcmahan2017communication,konevcny2016federated} where only weights or gradients are transferred instead of the raw data from all agents involved.  FL, in particular, is a new technique developed to address these issues specifically and has found applications in several domains like hospitals, mobile mobiles, and connected vehicles~\cite{yang2019federated,savazzi2021opportunities,zeng2021federated}.

\subsection{Related work}
A canonical approach to decentralized optimization is consensus-based gradient descent methods \cite{nedic2018network,nedic2009distributed,tsitsiklis1984problems,qin2022decentralized} which compute local weights and gradients for all clients' local data and then share the computed parameters with other clients.
The weight/gradient is then averaged from all the clients based on a network topology that dictates the communication structure of the learning paradigm. The network topology can be represented as a simple graph where edges represent communication links with individual clients through which the parameters (weights or gradients) are shared. 
These decentralized topologies minimize critical bottlenecks of centralized methods, such as network communication latency and network bandwidth, to improve scalability and efficiency in large-scale settings~\cite{tsitsiklis1984problems,hendrickx2020stability}.

While communication efficiency is one of the critical elements and challenges for distributed learning, and several efforts, including communication compression, have been made in this regard \cite{li2020federated,reisizadeh2020fedpaq,du2020high,zheng2020design,hashemi2021benefits,chen2021communication,chen2021decentralized},
these methods generally assume that the communication channels are noiseless. The performance of the trained model in the presence of noise should be one of the critical criteria in choosing the machine learning framework to ensure the robustness and safety of emerging applications that rely on distributed learning.

The effect of imperfect information sharing such as noisy communication  or quantization noise in an average consensus algorithm in a distributed framework was studied in~\cite{carli2007average,qin2021communication}. However, the impact of various levels of noise has not been studied. Additionally, the study in \cite{carli2007average} is limited to consensus problems only and does not encompass unique challenges that arise in modern decentralized optimization and learning, e.g., the inherent non-convexity of the learning objective. Other works including \cite{amiri2020federated,zhu2019broadband,xia2021fast,sery2021over,guo2020analog,wei2022federated} study the impact of noise in server-assisted FL. These works require a server, and they require somewhat restrictive  assumptions that typically are not satisfied in practical settings or are hard to verify. Unlike FL, DFL has no central server, and each client, effectively,  acts as its own individual server. Here, each client, typically, performs local Stochastic Gradient Descent (SGD) or its variations on its local dataset and exchanges messages only with its immediate neighbors. 

In this paper, our primary focus is on DFL in the presence of noise in communication channels. Recently, \cite{reisizadeh2019exact,vasconcelos2021improved,reisizadeh2022distributed,reisizadeh2022almost} study  the performance of a two-time scale method \cite{srivastava2011distributed} for DFL with channel noise while requiring the convexity of the objective function, uniformly bounded gradients, and access to the deterministic gradients; note that these three considerations are very restrictive assumptions, especially in emerging settings in large-scale learning.

\subsection{Contribution}
Motivated by the existing gap between perfect information and noisy decentralized learning, in this paper, we model the presence of noise in the communication channels as random vector with zero mean and different variances and study the performance of three decentralized FL algorithms by adding the noise to the parameters. 
In particular, we study the impact of noise in communication channels in three algorithms. The first algorithm, Federated Noisy Decentralized Learning (FedNDL1) was recently considered in~\cite{koloskova2020unified}, where the parameters were not subjected to any communication noise. In our analysis of this algorithm, we added noise to the parameters after the local SGD update. The new parameter with the noise is then exchanged with other clients through the gossip matrix (communication matrix as per the communication graph), and the global parameters are updated. This iteration, also known as communication rounds, continues throughout the training. In the second algorithm (FedNDL2), which is related to \cite{nedic2009distributed}, the noise is added before the consensus and local SGD update. In the third algorithm (FedNDL3), which has been considered in the noiseless case in \cite{rabbat2015multi}, the noise is added to the gradients as opposed to the parameters, and the result is exchanged with the immediate clients. 

We demonstrate, theoretically and empirically, that there are benefits in using FedNDL3 in the imperfect information setting that communicates the stochastic gradients. The intuition which is formalized theoretically is that the parameters are  sensitive to the added noise while the stochastic gradients, which are already imperfect, are resilient. Therefore, the error stemming from weaker consensus in FedNDL3 is not as severe as the detrimental impact of noise on FedNDL1 and FedNDL2.

\section{Problem Statement}
In this section, we describe the problem structure, assumptions, and the proposed algorithms that we analyzed in this paper. We start with a standard DFL setup in which $n$ clients/agents have their own local datasets and collaborate with each other to update the global parameters. Formally, the problem can be represented as 
\begin{equation}
    \label{eq:Objective-function}
    \min_{x\in \mathbb{R}^{d}}\Big[f(x)= \frac{1}{n}\sum_{i=1}^{n} f_i(x_i) \Big],
\end{equation}
where $f_i:\mathbb{R}^{d} \to\mathbb{R}$ for $i \in \{1, \dots , n\}$ is the local objective function of the $i^{th}$ client node. The stochastic formulation of the local objective function can be written
\begin{equation}
    \label{eq:local-equation}
    f_i(x) = \E_{\xi_i \sim \mathcal{D}_{i}}[\ell(x_i,\xi_i)],
\end{equation}
where $\xi_i$ is the data that has been sampled from the data distribution $\mathcal{D}_{i}$ for the $i^{th}$ client. The function $\ell(x_i,\xi_i)$ is the loss function evaluated for each client and for each data sample $\xi_i$. Here $x_i \in \mathbb{R}^d$ is the parameter vector of client $i$, and $X \in \mathbb{R}^{d\times n}$ is the matrix formed using these parameter vectors. The $i$-th column of this matrix corresponds to the parameter vector of $i$-th client. Thus, the primary objective of the clients is to achieve optimality through collaboration i.e., $x_i = x^{*}$. 

The main idea of the process is to achieve consensus in which the client can only communicate with its adjacent neighbors. This process of communication can be modeled using a communication graph with the help of a consensus matrix. More precisely, a client $i$ communicates with client $j$ based on a non-negative weight, $w_{ij} > 0$, that formulates the connectivity of client $i$ and client $j$. Similarly, for self-loops, the associated weight, $w_{ii} > 0$, and $w_{ij} = 0$ if there is no communication supposed to happen between $i$ and $j$. These associated weights are then placed in a matrix of dimension $n \times n$ and can be written as $W = [w_{ij}] \in [0,1]^{n\times n}$. The standard name for $W$ in the literature is the gossip or mixing matrix. To proceed, we define the mixing matrix.
\begin{definition}[\textbf{Mixing matrix}]
    The mixing/gossip matrix, $W = [w_{ij}] \in [0,1]^{n\times n}$, is a non-negative, symmetric $(W = W^{\top})$ and doubly stochastic $(\mathds{1}W = \mathds{1}, \mathds{1}^{\top}W = \mathds{1}^{\top})$ matrix, where $\mathds{1}$ is the column vector of unit elements of size $n$
\end{definition}

We next describe the algorithms studied in this paper. The entire process of DFL can be viewed as a two-stage pipeline: 1) SGD update step, performed locally on each client, and 2) Gossip/Consensus averaging step. We analyze three different scenarios of noise injection, resulting in three different algorithms. 
\subsection{Algorithm 1---FedNDL1}
In this algorithm, each client in parallel performs updates first, see---lines 4--6, and then communicates the updated parameters to their neighbors. The communication depends on the topology of the communication graph, i.e., the mixing matrix, $W$, through a noisy communication channel (line 7). Due to the noisy communication channel, the neighboring client receives a noisy version of the parameters, 
\begin{equation}
    \label{eq:consensus-step}
    x_i^{(t+1)} \ = \ \sum_{j=1}^n w_{ij} \ ( x_j^{(t+\frac{1}{2})} + \delta_j^{(t)} ),
\end{equation}
where $\delta_j^{(t)} \in \mathbb{R}^d$, is a zero mean random noise and  $x_j^{(t+\frac{1}{2})}$ is the vector of parameters sent by client $j$. Since we assume the noise to have a zero mean, the noise variance is
\begin{equation}
    \label{eq:noise-var}
    D^2_{t,j} = \E[\|\delta_j^{(t)}\|^2].
\end{equation}

\subsection{Algorithm 2---FedNDL2}
Similar to the previous algorithm, this algorithm also performs a two-stage process. However, in this algorithm, we perform the consensus step (line 9) over a noisy communication channel before computing the individual gradients,
\begin{equation}
    \label{eq:p2-gossip}
    x_i^{(t+\frac{1}{2})} =  \sum_{j=1}^{n} w_{ij} ( x_j^{(t)} + \delta_j^{(t)} ),
\end{equation}
where the symbols hold the same meaning as in the previous one. After the gossip averaging step, each client performs the SGD update on their local data (lines 10--12).
\subsection{Algorithm 3---FedNDL3}
In FedNDL3, 
the clients share their gradients over a noisy communication channel instead of the weights followed by the SGD update. The reason behind pursuing this idea comes from the motivation for our study of Noisy-FL and the fact that SGD inherently is a noisy process. So, pursuing this scenario gives more flexibility to handle the noise as a part of the SGD process. The entire formulation for this process can be written as, 
\begin{equation}
    \label{eq:p3-gradient-based}
    x_i^{(t+1)} \ = \ x_i^{(t)} - \eta_t \sum_{j=1}^{n} w_{ij} \ ( g_j^{(t)} + \delta_j^{(t)} ),
\end{equation}
where $g_j^{(t)}$ refers to a stochastic gradient of client $j$ at iteration $t$ and the rest of the terms hold the same meaning as before. The algorithms are summarized in the table above.

\definecolor{light-gray}{gray}{0.85}
\begin{algorithm}[h]
\renewcommand{\thealgorithm}{}
    \caption{\textcolor{darkgray}{FedNDL1},  \textcolor{blue}{FedNDL2}, and \textcolor{purple}{FedNDL3}}
    \label{alg:all-noisy-DFL}
    \begin{algorithmic}[1]
       \STATE {\bfseries Input:} For each node $i$ initialize: $x_i^{(0)} \  \in \ \mathbb{R}^d$, step size $ \{ \eta_t \}_{t=0}^{T-1} $, mixing matrix $W$, noise from the communication channel $\delta^{(t)}$\\
       \FOR{$t = 0, \dots, T$}
       \STATE \textcolor{darkgray}{ \textbf{FedNDL1:} } 
       \STATE \textcolor{darkgray}{ {Run in parallel for each client $i$}}
       \STATE  \textcolor{darkgray}{ Sample $\xi_{i}^{(t)} \mbox{, compute } g_i^{(t)} = \widetilde{\nabla} f_i (x_i^{(t)},\xi_i^{(t)})$}
       \STATE  \textcolor{darkgray}{ $ x_i^{(t+\frac{1}{2})} \ = \  x_i^{(t)} - \eta_t g_i^{(t)} $\ }
       \STATE \textcolor{darkgray}{ $ x_i^{(t+1)} \ = \ \sum_{j=1}^{n} w_{ij} \ ( x_j^{(t+\frac{1}{2})} + \delta_j^{(t)} ) $\ }              
       \STATE \textcolor{blue}{ \textbf{FedNDL2:} } 
      \STATE \textcolor{blue}{ $ x_i^{(t+\frac{1}{2})} \ = \ \sum_{j=1}^{n} w_{ij} \ ( x_j^{(t)} + \delta_j^{(t)} ) \  $ }
       \STATE \textcolor{blue}{ {Run in parallel for each clients $i$}}
       \STATE \textcolor{blue}{ Sample $\xi_{i}^{(t)} \mbox{ ,  } g_i^{(t+\frac{1}{2})}= \widetilde{\nabla} f_i (x_i^{(t+\frac{1}{2})},\xi_i^{(t)})$ \ }
       \STATE \textcolor{blue}{ $ x_i^{(t+1)} \ = \  x_i^{(t+\frac{1}{2})} - \eta_t g_i^{(t+\frac{1}{2})}  $ }

       \STATE \textcolor{purple}{ \textbf{FedNDL3:} }  
       \STATE \textcolor{purple}{ Run in parallel for each client $i$ }
       \STATE \textcolor{purple}{ Sample $\xi_{i}^{(t)} \mbox{, compute } g_i^{(t)} = \widetilde{\nabla} f_i (x_i^{(t)},\xi_i^{(t)})$}
       \STATE \textcolor{purple}{ $ x_i^{(t+1)} \ = \ x_i^{(t)} - \eta_t \sum_{j=1}^{n} w_{ij} \ ( g_j^{(t)} + \delta_j^{(t)} ) \ $ }
       \ENDFOR
    \end{algorithmic}
\end{algorithm}

\subsection{Assumptions}
We now discuss the assumptions we made in our analysis of the proposed algorithms. They are standard assumptions used in the study and analysis of distributed and decentralized algorithms, see~\cite{koloskova2019decentralized,koloskova2020unified,hashemi2021benefits}.
\begin{assumption}[\textbf{Smoothness}]
\label{as1} 
The objective function $\ell(x, \xi)$ is $L$-smooth with respect to $x$, for all $\xi$. Each $f_i(x)$ is $L$-smooth, that is,
\begin{equation}
\|\nabla f_i (x) - \nabla f_i (y)\| \leq L \|x-y\|, \qquad \mbox{for all x,y} .
\end{equation}
Hence the function $f$ is also $L$-smooth.
\end{assumption}
\begin{assumption}[\textbf{Bounded Variance}] 
\label{as2} 
The variance of the stochastic gradient of each client $i$ is bounded, $$\E[||\widetilde{\nabla}f_i (x_i^t,\xi_i^t) - \nabla f_i (x_i^t)||^2] \leq \sigma^2,$$ where $\xi_i^t$ denotes random batch of samples in client node $i$ for $t^{th}$ round, and $\widetilde{\nabla}f_i(x_i^t,\xi_i^t)$ denotes the stochastic gradient. In addition,  we also assume that the stochastic gradient is unbiased, i.e., $\E[\widetilde{\nabla}(f_i (x_i^t,\xi_i^t))] = \nabla f_i (x_i^t)$.
\end{assumption}
\begin{assumption}[\textbf{Mixing matrix}] 
\label{as3} 
The mixing matrix W satisfies for $\rho \in (0,1]$,
\begin{equation*}
    \|(\Bar{X} - X)W \|_{F}^2 \leq (1 - \rho)\|\Bar{X} - X\|_{F}^2,
\end{equation*}
which means that the gossip averaging step brings the columns of $X \in \mathbb{R}^{d \times n}$ closer to the row-wise average, that is, $\Bar{X}= X\frac{\mathds{1}\mathds{1}^{\top}}{n}$.
\end{assumption}
\begin{assumption}[\textbf{Bounded Client Dissimilarity (BCD)}] 
\label{as4} 
For all $x \in \mathbb{R}^{d}$,
\begin{equation*}
    \label{eq:bcd}
    \frac{1}{n}\sum_{i=1}^{n}\|\nabla f_{i}(x) - \nabla f(x)\|^2 \leq B^2,
\end{equation*}
where B is a constant. 
\end{assumption}
The above assumption is made to limit the extent of client heterogeneity and is standard in the DFL setup. While methods based on gradient tracking \cite{nedic2017achieving} do no require this assumption, they suffer from increased communication cost and the variance, which limits their practicality \cite{yuan2020can}. Note that this assumption is only used in the analysis of FedNDL1 and FedNDL2.
\begin{assumption}[\textbf{Noise model}] 
\label{as5} 
The noise present due to contamination of communication channel $\delta_i^{(t)}$ is independent, has zero mean and bounded variance, that is, $\E[\delta_i^{(t)}] = 0$ and $\E[||\delta_i^{(t)}||^2] = D_{t,i}^2 < \infty$. 
\end{assumption}
This assumption is specific to the imperfect information sharing setup and is considered recently in \cite{reisizadeh2022distributed,reisizadeh2022almost,wei2022federated}.
\begin{assumption}[\textbf{Bounded Recursive Consensus Error}] 
\label{as6} 
Let the consensus error be defined as $(C.E)_t=\frac{1}{n}\|\Bar{X}_{t} - {X}_{t}\|^2_F$. We assume that the consensus error is upper bounded,
\begin{equation*}
    \E[(C.E)_{t+1}] \leq \rho_t \E[(C.E)_{t}] + \gamma_{t},
\end{equation*} 
where $\rho_t \in (0,1)$ and $\gamma_t \geq 0$.   
\end{assumption}
\begin{remark}
We use this assumption in the convergence analysis of FedNDL3. 
Theoretically speaking the above assumption \ref{as6} can be viewed as a general formulation one obtains while unfolding the recursion of consensus error in the analysis of decentralized SGD---see, e.g., \cite{koloskova2020unified}. Additionally, this assumption is satisfied by FedNDL1 and FedNDL2 with $\rho_t = \rho$ and certain $\{\gamma_t\}$ that depends on other parameters of the problem. We further note that using multi-round gossiping \cite{hashemi2021benefits} or acceleration methods such as  Chebyshev acceleration \cite{arioli2014chebyshev,scaman2017optimal} this assumption may be satisfied by FedNDL3 as well and  $\rho_t$ and $\gamma_t$ can be significantly reduced. \end{remark}
\section{Convergence Analysis}
In this section, we state the main theorem providing an upper bound on the convergence error of FedNDL1, FedNDL2 and FedNDL3.
The convergence results for all the algorithms are for a non-convex $L$-smooth loss function. These results are for the case when there is noise present due to the imperfection of the communication channel, 

\begin{theorem}[\textbf{Smooth non-convex cases for Noisy-DFL}]
\label{thm-all}
Suppose Assumptions \ref{as1}, \ref{as2}, \ref{as3}, \ref{as4}, \ref{as5} and \ref{as6} (only for FedNDL3) hold. Let $\eta L < \frac{1}{12}$, $\eta = \mathcal{O}(\frac{1}{\sqrt{T}})$ and $\Bar{D}^2= \frac{1}{nT} \sum_{t,i = 1,1}^{T,n}D^2_{t,i}$, then we have:
\begin{itemize}
    \item \textbf{FedNDL1}: For $\eta L < \frac{\rho}{2\sqrt{6}}$,
    \begin{multline}
        \label{eq:thm1-FedNDL1}
        \frac{1}{T}\sum_{t=1}^{T}\E[\|\nabla f(\Bar{x}_t)\|^2] + \frac{L^2}{T}\sum_{t=1}^{T}\E[(C.E)_t] = \mathcal{O}\Big(\frac{\rho}{n\sqrt{T}} \sigma^2 + \frac{\rho^2}{T} B^2
     + \frac{T^{\frac{3}{2}}}{\rho} \Bar{D}^2\Big),
    \end{multline}
\end{itemize}
\begin{itemize}
    \item \textbf{FedNDL2}: For $\eta L < \frac{\rho}{4\sqrt{3}}$,
    \begin{multline}
        \label{eq:thm2-FedNDL2}
        \frac{1}{T}\sum_{t=1}^{T}\E[\|\nabla f(\Bar{x}_t)\|^2] + \frac{L^2}{T}\sum_{t=1}^{T}\E[(C.E)_t] = \mathcal{O}\Big(\frac{\rho}{n\sqrt{T}} \sigma^2 + \frac{\rho^2}{T} B^2
     + \frac{T^{\frac{3}{2}}}{\rho} \Bar{D}^2\Big),
    \end{multline}
\end{itemize}
\begin{itemize}
    \item \textbf{FedNDL3}:
    \begin{multline}
        \label{eq:thm3-FedNDL3}
        \frac{1}{T}\sum_{t=1}^{T}\E[\|\nabla f(\Bar{x}_t)\|^2] + \frac{L^2}{T}\sum_{t=1}^{T}\E[(C.E)_t] = \mathcal{O}\Big(\frac{1}{n\sqrt{T}} \sigma^2 + \frac{1}{T}\sum_{t=1}^{T}\frac{\gamma_t}{\rho_t}
     + \frac{1}{\sqrt{T}} \Bar{D}^2\Big),
    \end{multline}
\end{itemize}
where all expectations are w.r.t. the data and the noise.
\end{theorem}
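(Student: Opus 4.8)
The plan is to adapt the unified decentralized-SGD recursion argument of \cite{koloskova2020unified} while carrying the channel-noise terms through every inequality. Let $\bar x_t := \tfrac1n\sum_{i=1}^n x_i^{(t)}$ denote the averaged parameter, and similarly $\bar g^{(t)} := \tfrac1n\sum_i g_i^{(t)}$ and $\bar\delta^{(t)} := \tfrac1n\sum_i \delta_i^{(t)}$; by Assumption~\ref{as5}, $\E[\bar\delta^{(t)}] = 0$ and $\E[\|\bar\delta^{(t)}\|^2] = \tfrac1{n^2}\sum_i D_{t,i}^2$. Because $W$ is doubly stochastic, averaging each update rule eliminates $W$, giving (up to half-step bookkeeping) $\bar x_{t+1} = \bar x_t - \eta_t\bar g^{(t)} + \bar\delta^{(t)}$ for FedNDL1, $\bar x_{t+1} = \bar x_t + \bar\delta^{(t)} - \eta_t\bar g^{(t+1/2)}$ for FedNDL2, and $\bar x_{t+1} = \bar x_t - \eta_t(\bar g^{(t)}+\bar\delta^{(t)})$ for FedNDL3. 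The structural observation that drives the whole comparison is already visible here: in FedNDL1 and FedNDL2 the channel noise enters the averaged iterate (and, below, the consensus error) \emph{unscaled}, whereas in FedNDL3 it is multiplied by $\eta_t = \mathcal O(1/\sqrt T)$.

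\textbf{Step 1 (descent inequality for $\bar x_t$).} Apply $L$-smoothness of $f$ to the pair $(\bar x_t,\bar x_{t+1})$, take conditional expectation, and use unbiasedness of the stochastic gradients (Assumption~\ref{as2}) together with zero mean and independence of $\bar\delta^{(t)}$, so that all cross terms vanish. Split $\langle\nabla f(\bar x_t),\tfrac1n\sum_i\nabla f_i(x_i^{(t)})\rangle$ with the polarization identity; bound the mismatch $\|\nabla f(\bar x_t) - \tfrac1n\sum_i\nabla f_i(x_i^{(t)})\|^2 \le L^2 (C.E)_t$ via Assumption~\ref{as1}, and control $\E[\|\bar g^{(t)}\|^2]$ by $\sigma^2/n$ plus a squared-mean-gradient term that is absorbed by the negative inner-product term once $\eta_t L$ is small. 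This yields $\E[f(\bar x_{t+1})] \le \E[f(\bar x_t)] - \tfrac{\eta_t}{2}\E[\|\nabla f(\bar x_t)\|^2] + \tfrac{\eta_t L^2}{2}\E[(C.E)_t] + \mathcal O(\eta_t^2 L\sigma^2/n) + R_t$, with $R_t = \tfrac L2\E[\|\bar\delta^{(t)}\|^2]$ for FedNDL1/FedNDL2 and $R_t = \tfrac{L\eta_t^2}{2}\E[\|\bar\delta^{(t)}\|^2]$ for FedNDL3.

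\textbf{Step 2 (consensus-error recursion).} For FedNDL1 and FedNDL2 derive the recursion from scratch: writing the post-gossip deviation in matrix form as $(\bar X^{(t+1/2)} - X^{(t+1/2)})W + (\bar\Delta^{(t)} - \Delta^{(t)})W$, where $\Delta^{(t)}$ collects the noise columns, apply the contraction of Assumption~\ref{as3} to both pieces and use Young's inequality to convert $(a+b)^2$ into $(1-\tfrac\rho2)a^2 + \tfrac c\rho b^2$; then substitute the local SGD step $X^{(t+1/2)} = X^{(t)} - \eta_t G^{(t)}$ (and, for FedNDL2, the analogous ordering of SGD and gossip). Bounding the gradient dispersion $\tfrac1n\|\bar G^{(t)} - G^{(t)}\|_F^2$ with Assumptions~\ref{as2}, \ref{as4}, \ref{as1} produces terms in $\sigma^2$, $B^2$, $\E[\|\nabla f(\bar x_t)\|^2]$ and $L^2\E[(C.E)_t]$ (the last absorbed when $\eta L$ is small), so that $\E[(C.E)_{t+1}] \le (1-\tfrac{\rho}{2})\E[(C.E)_t] + \tfrac{c\eta_t^2}{\rho}(\cdots) + \tfrac{c}{\rho n}\sum_i D_{t,i}^2$ --- note once more that the noise term carries no $\eta_t^2$. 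For FedNDL3 this step is free: Assumption~\ref{as6} already postulates $\E[(C.E)_{t+1}] \le \rho_t\E[(C.E)_t] + \gamma_t$.

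\textbf{Step 3 (unroll and combine).} Unroll the consensus recursion and sum over $t$, using $\sum_{t}\sum_{s<t}(1-\tfrac\rho2)^{t-1-s}a_s \le \tfrac2\rho\sum_s a_s$ for FedNDL1/FedNDL2 (respectively $\sum_t\E[(C.E)_t] \le \sum_t\gamma_t/\rho_t + (\text{initialization})$ for FedNDL3) to bound $\tfrac1T\sum_t\E[(C.E)_t]$. Telescope the Step~1 inequality over $t=1,\dots,T$, divide by $\eta T/2$, insert the consensus-error bound into the resulting $\tfrac{L^2}{T}\sum_t\E[(C.E)_t]$ term, and finally set $\eta = \Theta(1/\sqrt T)$ subject to the stated constraints ($\eta L<\tfrac1{12}$ and the per-algorithm bounds $\eta L<\rho/(2\sqrt6)$, $\eta L<\rho/(4\sqrt3)$); recall $\tfrac1{nT}\sum_{t,i}D_{t,i}^2 = \bar D^2$. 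Collecting terms produces the three displayed bounds, and the gap between the $T^{3/2}\bar D^2/\rho$ term for FedNDL1/FedNDL2 and the $\bar D^2/\sqrt T$ term for FedNDL3 is precisely the unscaled-vs.-$\eta_t^2$ discrepancy from Steps~1 and~2 propagated through. The main obstacle is the bookkeeping in Step~2: one must keep every $\eta L$-dependent coefficient small enough to absorb the $L^2\E[(C.E)_t]$ feedback on both sides of the recursion while simultaneously isolating the noise variances $D_{t,i}^2$ with the correct powers of $\rho$ and $\eta$ --- a single mishandled Young's-inequality constant changes the final dependence on $\rho$ and $T$.
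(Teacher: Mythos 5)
Your proposal follows essentially the same route as the paper's proof: an $L$-smoothness descent inequality on the averaged iterate (in which the channel noise enters unscaled for FedNDL1/FedNDL2 but multiplied by $\eta^2$ for FedNDL3), a consensus-error recursion obtained from Assumption \ref{as3} plus Young's inequality for the first two algorithms and postulated by Assumption \ref{as6} for the third, and a combination of the two after setting $\eta = \Theta(1/\sqrt{T})$. The only cosmetic difference is the final bookkeeping: you unroll the consensus recursion and sum it directly, whereas the paper folds the two inequalities into a potential function $\psi_t = \E[f(\bar X_t)] + \phi\,(C.E)_t$ (with a time-varying $\phi_t$ for FedNDL3) and telescopes $\psi_{t+1}-\psi_t$ --- the two devices are equivalent here.
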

\Cref{thm-all} establishes a worst-case upper bound on the convergence of the three algorithms studied in the paper. In particular, the theorem jointly bounds the expected gradient norm (which is a notion of approximate first-order stationarity of the average iterate $\bar{x}_t$) and the consensus error. The convergence bounds consist of three terms: the first term effectively captures the error arising from inaccurate initialization and stochasticity of the first-order oracle, which matches the error of centralized SGD. The second term captures the effect of data heterogeneity, and the last term captures the adverse effect of imperfect communication modeled as communication noise. 

In addition, \Cref{thm-all} captures the impact of presence channel noise on the convergence of studied algorithms. Specifically, \eqref{eq:thm1-FedNDL1} and \eqref{eq:thm2-FedNDL2} indicate that FedNDL1 and FedNDL2 suffer from a severe impact of noise on the worst-case convergence: as the number of communication rounds/iterations $T$ increases the guarantee on finding a stationary solution and consensus error weakens. In fact, the error increases with $T$. We will verify these results further numerically in \Cref{sec:exp}. Furthermore, as the connectivity of the communication graphs decreases (which corresponds to a smaller $\rho$), the impact of noise increases. This point is further verified in \Cref{sec:exp}. With regard to FedNDL3, however, \Cref{thm-all} establishes that the algorithm is resilient to the impact of noise. In particular, in contrast with the convergence bound of FedNDL1 and FedNDL2, the last term in \eqref{eq:thm3-FedNDL3} decreases with $T$. Intuitively, this theoretically-grounded property is linked to SGD which inherently is a noisy process and thus is more resilient towards added noise to a certain degree. \Cref{thm-all} further shows that, different from FedNDL1 and FedNDL2, the impact of noise is independent of the communication topology as the last term in \eqref{eq:thm3-FedNDL3} is independent of $\rho_t$ and $\gamma_t$. In \Cref{sec:exp} we numerically verify these two distinguishing properties of FedNDL3.

 \subsection{Proof-Sketch}
 e start the proof by upper bounding the second moment of the gradient on the average of iterates by using the $L$-smoothness of the loss function. This step is a standard practice used in the convergence proofs for non-convex, $L$-smooth loss functions. The second moment here is bounded by the inaccurate initialization, the variance of the stochastic gradients, noise present due to imperfect channels, and the consensus error function, $(C.E)_{t}$. For FedNDL3, in the next step, we proceed with upper bounding the $(C.E)_{t+1}$ followed by defining a potential function,
 \begin{equation}
     \label{eq:DFL_potential}
     \psi_{t} = \E[f(\Bar{x}_{t})] + \phi (C.E)_{t}, {\text{  where $\phi>0$ is a constant.}}
 \end{equation}
 This new potential function enables us to jointly bound the expected gradient norm and the consensus error without requiring restrictive and impractical assumptions such as the bounded gradient norm assumption.
 Another noteworthy point here is the term $\phi$ for FedNDL3 which now, unlike in \eqref{eq:DFL_potential}, is time-dependent and is denoted as $\phi_t$. Now, we telescope over the $\psi_{t+1} - \psi_{t}$ for $t=\{1,\ldots, T\}$ and dividing it by $T$, along with specific choices of $\phi$ and $\phi_t$ results in \Cref{thm-all}. 
 Detailed proofs for FedNDL1, FedNDL2 and FedNDL3 are provided in the 

\captionsetup{font=normal}
\captionsetup[sub]{font=normal}
\begin{figure*}[!]
\vspace{0.1in}
\begin{subfigure}{0.33\textwidth}
    \centering    \includegraphics[width=\textwidth]{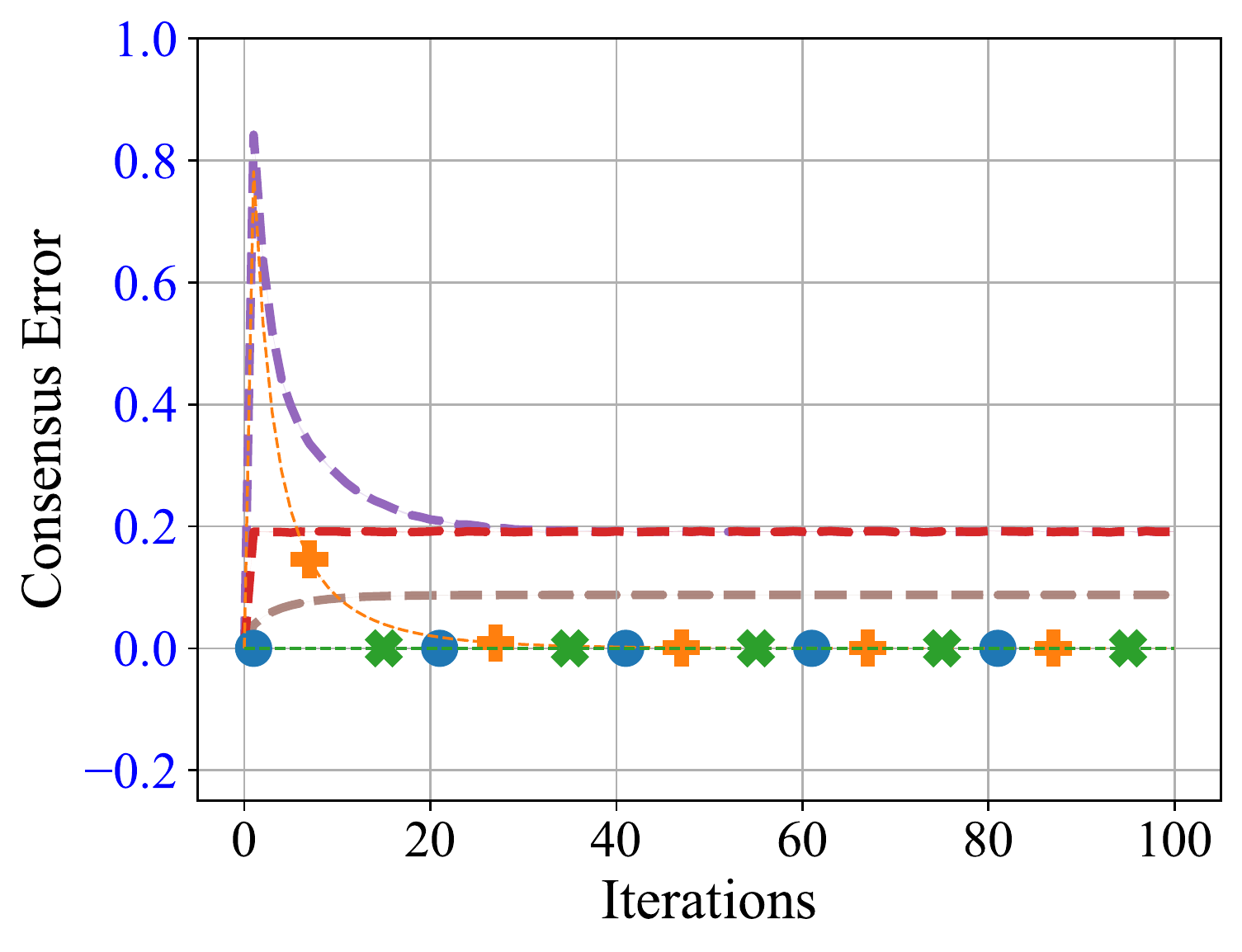}
    \caption{Fully-connected topology}
    \label{fig:syn_1_0}
\end{subfigure}
\begin{subfigure}{0.33\textwidth}
    \centering
    \includegraphics[width=\textwidth]{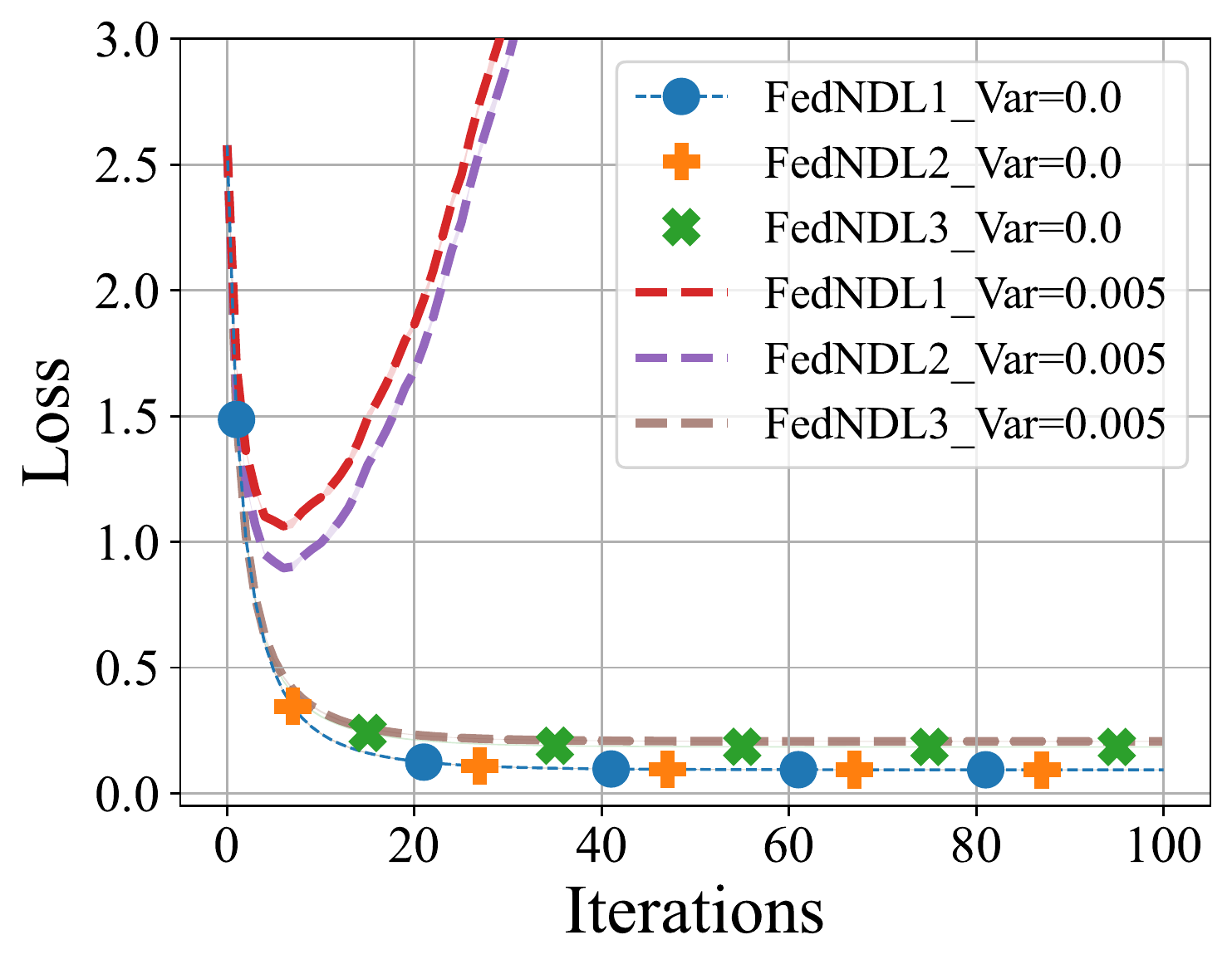}
    \caption{Torus topology}
    \label{fig:syn_1_0.005}
\end{subfigure}
\begin{subfigure}{0.33\textwidth}
    \centering
    \includegraphics[width=\textwidth]{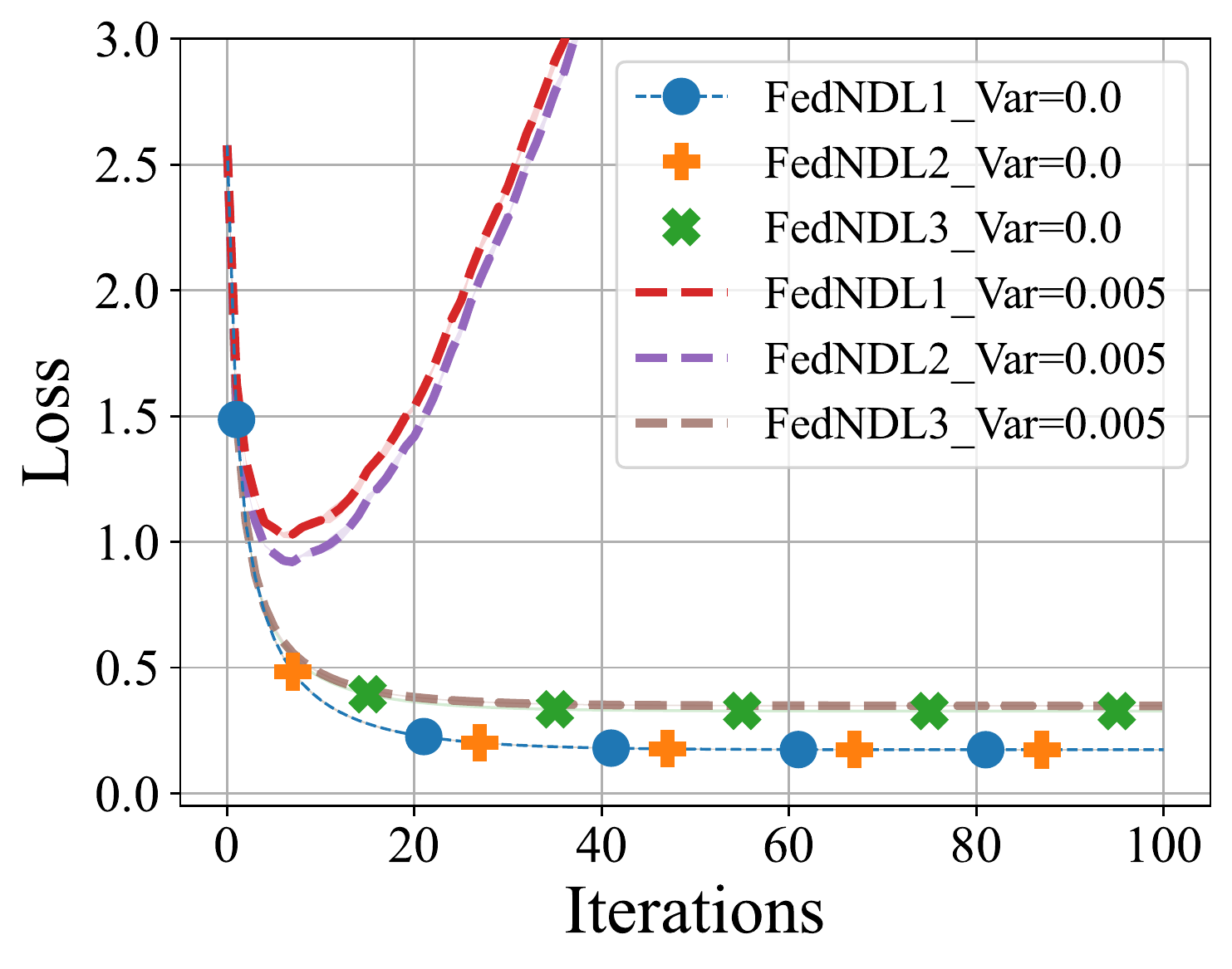}
    \caption{Ring topology} 
    \label{fig:syn_1_0.001}
\end{subfigure} 
\caption{Loss vs. iterations with and without noise for different communication topologies.}
\label{fig:loss_all}
\vspace{-4mm}
\end{figure*}

\begin{figure*}[!]
\begin{subfigure}{0.33\textwidth}
    \centering
    \includegraphics[width=\textwidth]{Figures/Syn1/syn1_fully_connected_var0.005_plotconsensus.pdf}
    \caption{Fully-connected topology} 
    \label{fig:syn_1_0.005_full_cons}
\end{subfigure}
\begin{subfigure}{0.33\textwidth}
    \centering
    \includegraphics[width=\textwidth]{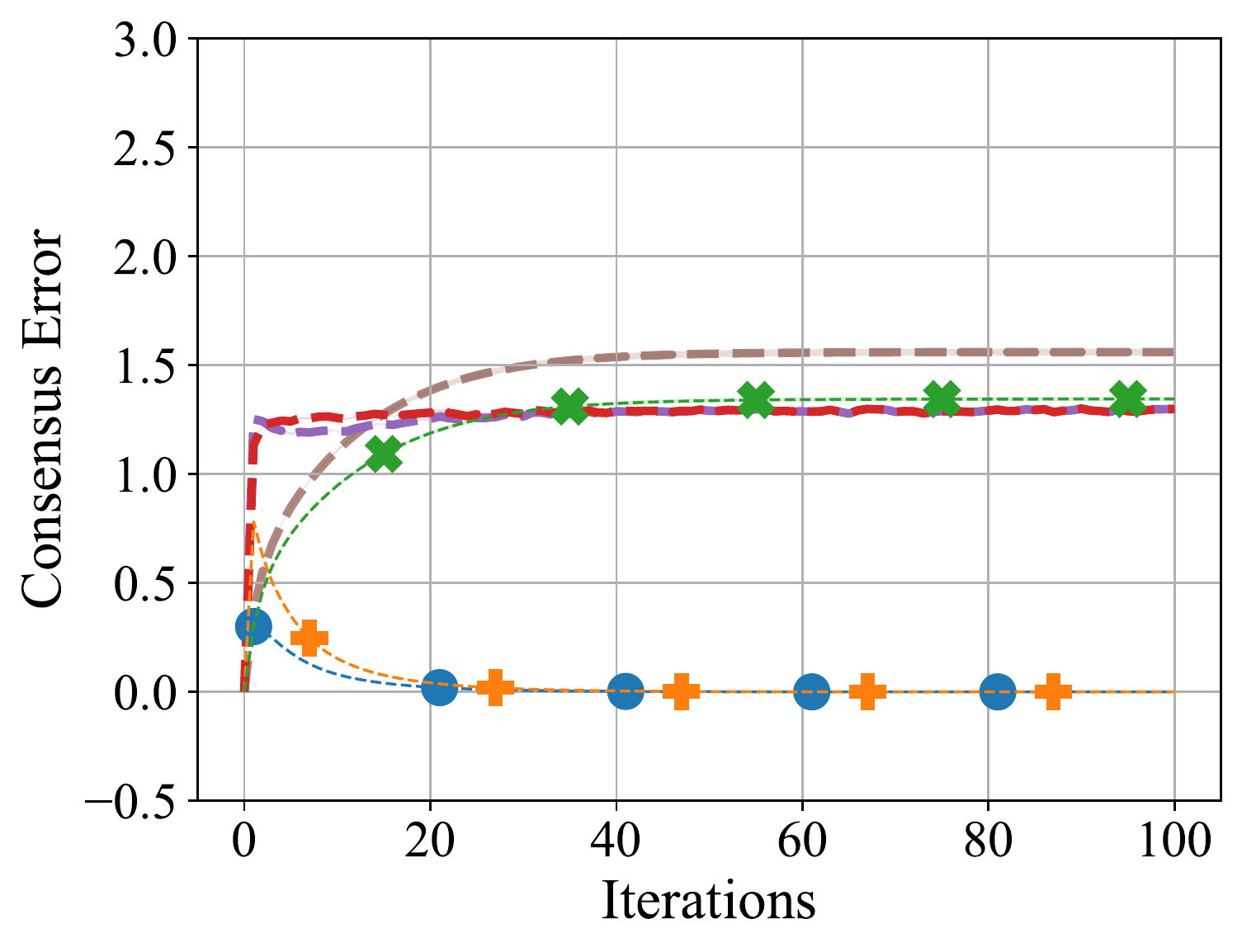}
    \caption{Torus topology}
    \label{fig:syn_1_0.005_torus_cons}
\end{subfigure}
\begin{subfigure}{0.33\textwidth}
    \centering
    \includegraphics[width=\textwidth]{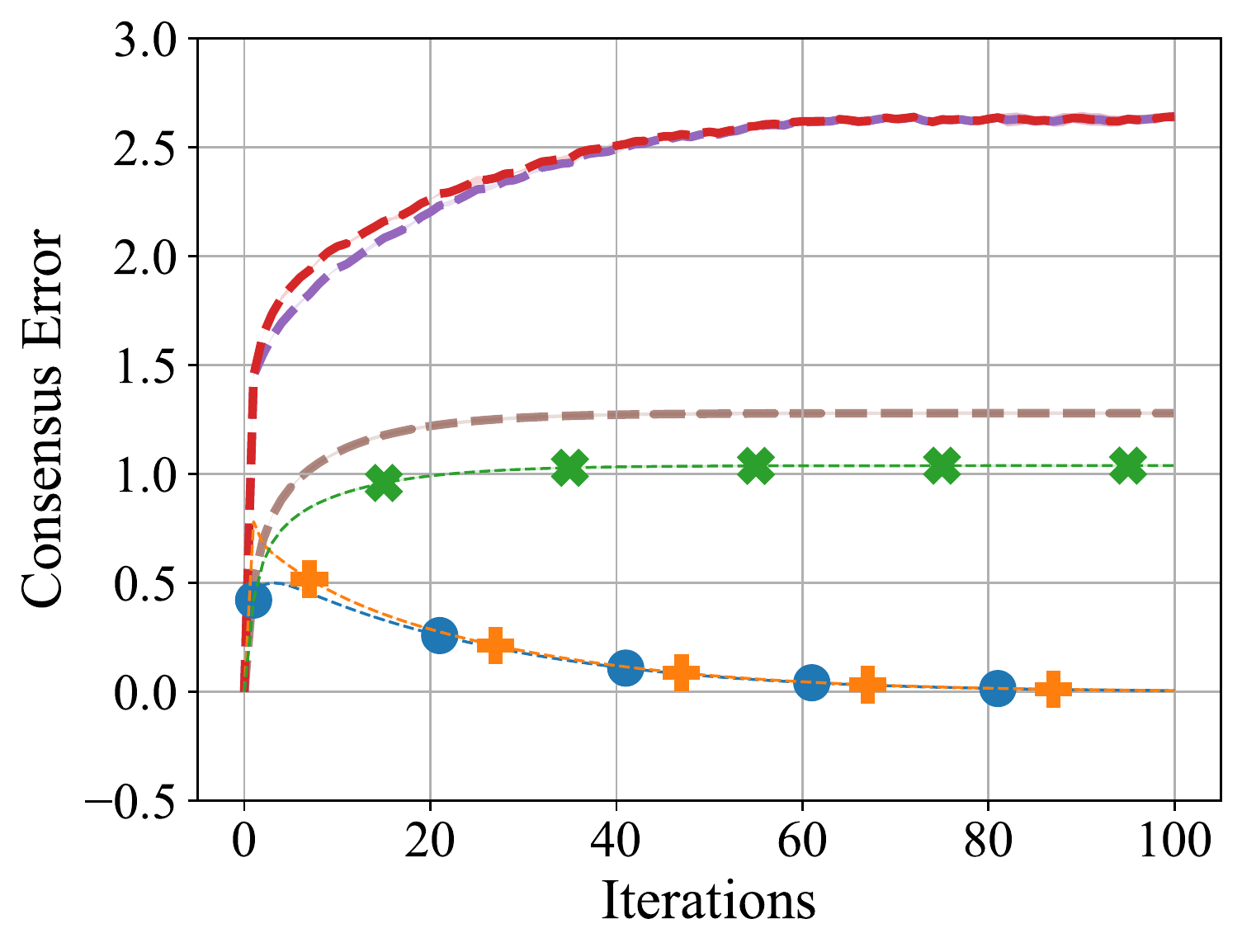}
    \caption{Ring topology} 
    \label{fig:syn_1_0.005_ring_cons}
\end{subfigure}
\caption{Consensus error vs. iterations with and without noise for different communication topologies. Note the different Y-axis scale in Fig (a) as compared with Fig (b) and (c) for better readability.}
\label{fig:consensus_all}
\vspace{-4mm}
\end{figure*}

\section{Experiments}\label{sec:exp}
In this section, we perform several experiments on regression problems to verify the impact of noise on the convergence of the three proposed algorithms as established in~\Cref{thm-all}. 
We consider the case when the number of clients, $n$ = 16. The experiments are repeated three times, and the results (loss/consensus error) are averaged. We use the mean-squared error loss function with $L_2$ regularization. The learning rate of the model is set as 0.2 with a decay of 0.9 with every iteration. We describe the dataset generation next. 

 
\textit{Linear Regression Dataset:} We generate a synthetic data samples ($m$ = 10000)  ${(x_i; y_i)}^{m}_{i=1}$ according to $  y_i = \langle w, x_i \rangle  + \epsilon_i $, where $ w \in \mathbb{R}^{2000}$, 
$x_i \sim \mathcal{N}(0; I_{2000})$ and noise, $\epsilon_i \sim \mathcal{N}(0, 0.05)$. 



The experiments are performed with various levels of noise variance, $D_{t,i}^2$ for all $t,i$, as described in the algorithms, for various communication topologies, namely the ring, torus, and fully connected network. The mixing matrix can be defined as a weighted adjacency matrix of a given communication graph\cite{xiao2004fast}. The nonzero weights in the mixing matrix for ring topology are equal to $\frac{1}{3}$, in the torus topology $\frac{1}{5}$, and fully connected topology, $\frac{1}{n}$. 

We first perform the experiments with no noise as a baseline and then gradually increase the noise variance to study the robustness of the algorithms. For the purpose of consistency, we have shown the results of the experiments with noise variance  $D_{t,i}^2 = 0.005$ in \Cref{fig:loss_all,fig:consensus_all} along with no noise scenario. The experiments were performed on Intel Xeon Gold workstation.
\subsection{Discussion}
We performed numerical experiments on the algorithms with and without noise for different topologies. We observed, see~\Cref{fig:loss_all}, that the algorithms FedNDL1 and FedNDL2 perform poorly in terms of convergence due to noise presence which is consistent with \Cref{thm-all}. In addition, as it can be seen in~\Cref{fig:consensus_all}, the consensus error also increases with the noise which is also consistent with our theoretical analysis presented in~\Cref{thm-all}. 

On the other hand, the algorithm FedNDL3 is observed to be the most robust as it does not diverge in the presence of added channel noise. The noise term for the FedNDL3 in the upper bound given in~\Cref{thm-all} is of order $\mathcal{O}({T^{-\frac{1}{2}}})$, whereas it is of order $\mathcal{O}({T^{\frac{3}{2}}})$ for FedNDL1 and FedNDL2.  This effect of the noise can also be observed in ~\Cref{fig:loss_all}.

The consensus error is strongly dependent on the communication network. We can observe that the consensus error is much lower for the fully connected network and significantly higher for the ring topology for the same algorithm in the presence of noise. The consensus error function plots in~\Cref{fig:consensus_all} are consistent with the connectivity of the communication network (number of client interactions). The fully connected topology encompasses the maximum number of clients interaction. It therefore, yields the lowest consensus error, followed by the torus topology and then the ring topology, which has the lowest number of client interactions.


\section{Conclusion and Future Work}
We studied the impact of noisy communication channels on the convergence of DFL. We proposed multiple scenarios for establishing consensus in the presence of noise and provided experimental results on all the algorithms. Additionally, we provided theoretical results for FedNDL1, FedNDL2 and FedNDL3, under the assumption of smooth non-convex function, and we observed that in FeDNDL3, the noise term in the upper bound given by~\Cref{thm-all} is of order $\mathcal{O}({T^{-\frac{1}{2}}})$ and is independent of communication topology. In contrast, the impact of noise on the convergence of FedNDL1, FedNDL2 increases with $T$ and weaker communication structures. We conducted numerical experiments on a synthetically generated dataset and observed that FedNDL3 is more robust against the added noise than the other two algorithms analyzed in this paper. 

Future research should focus on a formal study of the benefits of multi-round gossiping \cite{hashemi2021benefits}, or acceleration methods such as Chebyshev acceleration \cite{arioli2014chebyshev,scaman2017optimal}, and establishing statistical lower bounds on the convergence of the algorithms under the presence of noise in communication channels. The algorithms should also be tested on large-scale deep-learning datasets as well as for different network topologies.

\bibliographystyle{ieeetr}
\bibliography{refs.bib}
\clearpage
\appendices
\section{Theorems}
\begin{theorem}[\textbf{Smooth non-convex case for \textbf{FedNDL1}}] 
    \label{thm-Algorithm-1}
    Suppose Assumptions \ref{as1}, \ref{as2}, \ref{as3}, \ref{as4} and \ref{as5} hold. Let $\eta L < \frac{1}{6}$, $\rho > 2\sqrt{6} \eta L$, and $f^{*} = \min_{x \in \mathbb{R}^{d}} f(x)$, for $\phi > \frac{L^2 \eta(\frac{1}{2}+2L \eta)}{[\frac{\rho}{2} - \frac{12 L^2 \eta^2}{\rho}]}$, we have , 
    \begin{multline}
        \frac{1}{T}\sum_{t=1}^{T}\E[\|\nabla f(\Bar{X}_t)\|^2] + 
        \frac{L^2}{2}\frac{1}{T}\sum_{t=1}^T\E[(C.E)_t]
        \\
        \leq \frac{2(f(\Bar{X}_1) - f^{*} + \phi(C.E)_{1})}{\eta(1 - 4 \eta L)} + \frac{2\eta(\phi(1-\rho)+\frac{L}{n})}{(1 - 4 \eta L)}\sigma^2 
        \\
        + \frac{16 \phi\eta B^2}{(1 - 4 \eta L)\rho} 
     + \frac{2\Big(L+\phi\big[\frac{\rho}{2}+\frac{2}{\rho}\big]\Big)}{\eta(1 - 4 \eta L)}\frac{1}{nT}\sum_{t=1}^{T}\sum_{i=1}^{n}{D}^2_{t,i}
    \end{multline}
\end{theorem}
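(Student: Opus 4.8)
The plan is to follow the standard descent-lemma-plus-consensus-error-recursion template for decentralized SGD, adapted to account for the injected channel noise. First I would introduce the averaged iterate $\bar{x}_t = \frac{1}{n}\sum_{i=1}^n x_i^{(t)}$ and track its dynamics. Averaging the FedNDL1 update \eqref{eq:consensus-step} over $i$ and using double stochasticity of $W$ (so $\mathds{1}^\top W = \mathds{1}^\top$), the mixing step preserves the mean up to the averaged noise: $\bar{x}_{t+1} = \bar{x}_t - \eta_t \frac{1}{n}\sum_i g_i^{(t)} + \frac{1}{n}\sum_{i,j} w_{ij}\delta_j^{(t)}$. Then I would invoke $L$-smoothness (Assumption~\ref{as1}) to write the descent inequality $\E[f(\bar{x}_{t+1})] \le \E[f(\bar{x}_t)] - \frac{\eta_t}{2}\E\|\nabla f(\bar{x}_t)\|^2 + (\text{curvature terms})$, where the cross term $\langle \nabla f(\bar x_t), \frac1n\sum_i g_i^{(t)}\rangle$ is handled by unbiasedness (Assumption~\ref{as2}) and the gap $\nabla f_i(x_i^{(t)}) - \nabla f_i(\bar x_t)$ is controlled, again by $L$-smoothness, in terms of the consensus error $(C.E)_t = \frac1n\|\bar X_t - X_t\|_F^2$. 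The noise term contributes $\frac{1}{n}\bar D_t^2$-type quantities after taking expectations, using zero-mean independence (Assumption~\ref{as5}) and the fact that $\sum_i (\sum_j w_{ij})^2 \le n$ by double stochasticity. The stochastic-gradient variance (Assumption~\ref{as2}) contributes the $\sigma^2/n$ term.

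Second, I would derive the recursion for the consensus error. Writing the mixing step in matrix form, $X_{t+1} = (X_{t+1/2} + \Delta_t)W$ where $\Delta_t$ collects the noise columns, and using Assumption~\ref{as3} (contraction $\|(\bar X - X)W\|_F^2 \le (1-\rho)\|\bar X - X\|_F^2$) together with Young's inequality to split the local-SGD drift and the noise, I would obtain something of the form $\E[(C.E)_{t+1}] \le (1-\frac{\rho}{2})\E[(C.E)_t] + \frac{c\eta_t^2}{\rho}\big(\text{grad terms}\big) + c'\,\bar D_t^2$. The gradient terms here decompose, via Assumption~\ref{as4} (bounded client dissimilarity $B^2$) and Assumption~\ref{as2}, into $\sigma^2$, $B^2$, $\E\|\nabla f(\bar x_t)\|^2$, and $L^2(C.E)_t$ pieces; the $L^2(C.E)_t$ piece is absorbed back into the left side using the condition $\rho > 2\sqrt 6\,\eta L$ (equivalently $\frac{12L^2\eta^2}{\rho} < \frac{\rho}{2}$, which is exactly the denominator appearing in the stated lower bound on $\phi$).

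Third, I would combine the two estimates through the potential function $\psi_t = \E[f(\bar x_t)] + \phi\,(C.E)_t$ with $\phi$ chosen large enough — precisely $\phi > \frac{L^2\eta(\frac12 + 2L\eta)}{\rho/2 - 12L^2\eta^2/\rho}$ — so that in the telescoped difference $\psi_{t+1} - \psi_t$ the $(C.E)_t$ coefficient becomes negative enough to dominate the $\frac{L^2}{2}(C.E)_t$ term we want on the left, while the $\E\|\nabla f(\bar x_t)\|^2$ coefficient stays at least $\frac{\eta}{2}(1 - 4\eta L) > 0$ (using $\eta L < \frac16$). Telescoping $t = 1,\dots,T$, dividing by $T$, using $\psi_{T+1} \ge f^* $, and collecting the accumulated $\sigma^2$, $B^2$, and $\sum_{t,i} D^2_{t,i}$ contributions yields the claimed bound; the final step of plugging in $\eta = \mathcal{O}(1/\sqrt T)$ then gives the rates in \eqref{eq:thm1-FedNDL1}.

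The main obstacle I anticipate is the careful bookkeeping in the consensus-error recursion: the channel noise enters $X_{t+1}$ \emph{before} the contraction $W$ is applied, so the cross term between the (deterministic-given-history) drift $\bar X_{t+1/2} - X_{t+1/2}$ and the noise must be shown to vanish in expectation, and the surviving noise variance must be bounded using $\|W\|$ and double stochasticity without picking up a spurious $\rho^{-1}$ or dimension factor. Getting the constants to line up so that the same $\phi$ simultaneously (i) absorbs $L^2(C.E)_t$ from the descent step, (ii) leaves the advertised $\frac{L^2}{2}(C.E)_t$ on the left, and (iii) keeps the noise coefficient at the stated $\frac{2(L + \phi[\rho/2 + 2/\rho])}{\eta(1-4\eta L)}$ is the delicate part; everything else is the routine application of Young's inequality and the stated assumptions.
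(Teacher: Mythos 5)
Your proposal is correct and follows essentially the same route as the paper's proof: a descent inequality for the averaged iterate (which, by double stochasticity, evolves as $\bar X_{t+1}=\bar X_t-\frac{\eta}{n}\sum_i\widetilde\nabla_{t,i}+\bar\delta_t$), a contraction-plus-Young recursion for the consensus error in which the cross terms between drift and zero-mean noise vanish in expectation, and the potential function $\psi_t=\E[f(\bar X_t)]+\phi\,\E[(C.E)_t]$ telescoped over $t$ with exactly the stated choice of $\phi$ and the step-size condition $\rho>2\sqrt{6}\,\eta L$. The only cosmetic difference is that the paper's consensus recursion eliminates the $\E\|\nabla f(\bar x_t)\|^2$ piece entirely via smoothness (leaving only $\sigma^2$, $B^2$, $L^2(C.E)_t$, and noise terms), whereas you allow for a gradient-norm piece there; either bookkeeping leads to the same bound.
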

\begin{theorem}[\textbf{Smooth non-convex case for \textbf{FedNDL2}}] 
    \label{thm-Algorithm-2}
    Suppose Assumptions \ref{as1}, \ref{as2}, \ref{as3}, \ref{as4} and \ref{as5} hold. Let $\eta L < \frac{1}{12}$, $\rho > 4\sqrt{3} \eta L$ and $f^{*} = \min_{x \in \mathbb{R}^{d}} f(x)$, for $\phi > \frac{L^2 \eta(1+2L \eta)(1-\rho)}{[\frac{\rho}{2} - \frac{24 L^2 \eta^2}{\rho}]}$, we have 
    \begin{multline}
        \frac{1}{T}\sum_{t=1}^T\E[\|\nabla f(\Bar{X}_t)\|^2] +
         \frac{L^2}{T}\sum_{t=1}^T\E[(C.E)_t] 
        \\
        \leq \frac{2(f(\Bar{X}_1) -f^{*} + \phi(C.E)_{1})}{\eta(1 - 2\eta L)T} + \frac{2\eta(\phi+\frac{L}{n})}{(1 - 2\eta L)}\sigma^2 
        + \frac{8\phi\eta(1+\frac{2}{\rho})}{(1 - 2\eta L)}B^2
        \\
        + \frac{2\Big(L^2[\eta(1+2 L \eta) + 2] + 4\phi[1-\rho + \frac{24 L^2 \eta^2}{\rho}]\Big)}{\eta(1 - 2\eta L)}\frac{1}{nT}\sum_{t=1}^T \sum_{i=1}^n D_{t,i}^2 
    \end{multline}
\end{theorem}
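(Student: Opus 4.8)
\textbf{Proof proposal for \Cref{thm-Algorithm-2} (FedNDL2).}

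The plan is to follow the same descent-plus-consensus template used for FedNDL1, but carefully tracking how the ordering of operations in FedNDL2 (gossip \emph{then} SGD) changes the recursions. First I would write the average iterate update. Averaging the per-client recursion $x_i^{(t+1)} = x_i^{(t+1/2)} - \eta_t g_i^{(t+1/2)}$ over $i$ and using double-stochasticity of $W$ gives $\bar x_{t+1} = \bar x_t + \tfrac1n\sum_j \delta_j^{(t)} \tfrac{\mathds 1^\top}{ } \cdots - \eta_t \tfrac1n\sum_i g_i^{(t+1/2)}$; the key point is that the noise enters the average only through $\bar\delta_t := \tfrac1n\sum_j \delta_j^{(t)}$, whose second moment is $\tfrac1{n^2}\sum_j D_{t,j}^2$ by independence (\Cref{as5}). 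Applying $L$-smoothness of $f$ to $f(\bar x_{t+1})$ and taking expectations, I would expand the inner product $\langle \nabla f(\bar x_t), \bar x_{t+1} - \bar x_t\rangle$, use unbiasedness of the stochastic gradients (\Cref{as2}) to kill the noise cross-term and the gradient-noise cross-term, and bound the difference between $\tfrac1n\sum_i \nabla f_i(x_i^{(t+1/2)})$ and $\nabla f(\bar x_t)$ by $L^2 \cdot (C.E)_{t+1/2}$ via smoothness. This is the standard step producing a descent inequality of the form $\E[f(\bar x_{t+1})] \le \E[f(\bar x_t)] - c_1 \eta_t \E\|\nabla f(\bar x_t)\|^2 + c_2 \eta_t L^2 \E[(C.E)_{\text{half}}] + c_3 \eta_t^2(\sigma^2/n + \text{noise})$.

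Second, I would derive the consensus-error recursion. Here the crucial difference from FedNDL1 is that the mixing matrix is applied to $X_t + \Delta_t$ \emph{before} the gradient step, so $X_{t+1} = (X_t + \Delta_t)W - \eta_t G_{t+1/2}$. Projecting onto the orthogonal complement of $\mathds 1$ and using \Cref{as3}, $\|\bar X_{t+1/2} - X_{t+1/2}\|_F^2 \le (1-\rho)\|\bar X_t - X_t\|_F^2 + (\text{noise projected through } W) $; since $W$ is doubly stochastic the projected noise term is controlled by $\sum_i D_{t,i}^2$ with a factor involving $(1-\rho)$ and $1/\rho$ from a Young's-inequality split of the cross terms. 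Then the SGD step adds $\eta_t^2 \|G_{t+1/2} - \bar G_{t+1/2}\|_F^2$-type terms, which I would bound using \Cref{as2} (variance $\sigma^2$), \Cref{as4} (BCD constant $B^2$), and $L$-smoothness to relate $\|\nabla f_i(x_i^{(t+1/2)})\|^2$ back to $\|\nabla f(\bar x_t)\|^2 + L^2 (C.E)$. The split constants are exactly what forces the stated step-size condition $\eta L < \tfrac1{12}$ and $\rho > 4\sqrt3\,\eta L$: the $(C.E)_{t+1/2}$ coefficient must stay strictly below $1$ after absorbing the $\eta_t^2 L^2/\rho$ feedback term.

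Third, I would combine the two recursions into the potential $\psi_t = \E[f(\bar x_t)] + \phi (C.E)_t$ with $\phi$ chosen large enough that the positive $(C.E)_{t+1}$ contribution from the descent step is dominated by the contraction $(1-\rho)\phi$ from the consensus step — this is where the threshold $\phi > \tfrac{L^2\eta(1+2L\eta)(1-\rho)}{\rho/2 - 24L^2\eta^2/\rho}$ comes from. Telescoping $\psi_{t+1} - \psi_t$ over $t = 1,\dots,T$, using $\psi_{T+1} \ge f^* + 0$, and dividing by $T\eta(1-2\eta L)/2$ yields the claimed bound, with the four terms coming respectively from the initialization gap $f(\bar X_1) - f^* + \phi (C.E)_1$, the $\sigma^2$ stochastic-gradient term scaled by $\eta(\phi + L/n)$, the $B^2$ heterogeneity term scaled by $\phi\eta(1+2/\rho)$, and the accumulated channel noise $\tfrac1{nT}\sum_{t,i} D_{t,i}^2$ with the indicated $L^2$- and $\phi$-dependent prefactor. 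Finally, substituting $\eta = \mathcal O(1/\sqrt T)$ and $\phi = \Theta(\rho)$ collapses this to the $\mathcal O$-form in \Cref{thm-all}.

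The main obstacle I expect is bookkeeping the consensus recursion for FedNDL2 correctly: because the noise is injected \emph{inside} the gossip step rather than after it, one must be careful that the noise contribution to $\|\bar X_{t+1/2} - X_{t+1/2}\|_F^2$ is $(I - \tfrac{\mathds 1\mathds 1^\top}{n})\Delta_t W$, and bounding its expected squared Frobenius norm by $\sum_i D_{t,i}^2$ (rather than something larger) uses both the spectral properties of $W$ and independence across clients; getting the constants here right — and propagating them through the Young's-inequality splits without the $(C.E)$ coefficient exceeding $1$ — is the delicate part that pins down the precise step-size and $\phi$ conditions in the statement.
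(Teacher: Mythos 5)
Your proposal follows essentially the same route as the paper's proof: the $L$-smoothness descent step on the averaged iterate with the noise entering only through $\bar\delta_t$, the half-step consensus error bounded by $(1-\rho)(C.E)_t$ plus a channel-noise term (which is exactly where the $(1-\rho)$ factor in the $\phi$ threshold originates), the full consensus recursion via Young's-inequality splits with Assumptions \ref{as2}--\ref{as4}, and the potential $\psi_t = \E[f(\bar x_t)] + \phi(C.E)_t$ telescoped over $t$. The only cosmetic deviations are that you invoke independence to get $\E\|\bar\delta_t\|^2 = \tfrac{1}{n^2}\sum_j D_{t,j}^2$ where the paper settles for the Jensen bound $\tfrac1n\sum_j D_{t,j}^2$, and you phrase the gradient-deviation bound through $(C.E)_{t+1/2}$ before folding it back into $(C.E)_t$, which is equivalent to the paper's direct bound on $\|\bar X^t - X^{t+\frac12}\|_F^2$.
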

\begin{theorem}[\textbf{Smooth non-convex case for FedNDL-3}] 
    \label{thm:p3-noisy-DFL}
    Suppose Assumptions \ref{as1}, \ref{as2}, \ref{as3}, \ref{as4}, \ref{as5} and \ref{as6} hold. Let $\eta L < \frac{1}{6}$ and $f^{*} = \min_{x \in \mathbb{R}^{d}} f(x)$, for $\phi_t = L^2 \eta (1+2L\eta) + 2\phi_{t+1}\rho_t$, we have
    \begin{multline}
        \frac{1}{T}\sum_{t=1}^{T}\E[\|\nabla f(\Bar{X}_t)\|^2] + \frac{2L^2}{T}\sum_{t=1}^{T}\E[(C.E)_t] 
        \\ 
        \leq \frac{2(f(\Bar{X}_1) - f^{*} + \phi_1(C.E)_{1})}{\eta (1 - 2L \eta) T} + \frac{L \eta}{n(1 - 2 L \eta)}\sigma^2 \\ + \frac{L^2(1-6L\eta)}{T(1-2L \eta)}\sum_{t=1}^{T}\frac{\gamma_t}{\rho_{t}}
     + \frac{L \eta}{(1-2 L \eta)}\frac{1}{nT}\sum_{t=1}^{T}\sum_{i=1}^{n}{D}^2_{t,i}
    \end{multline}
\end{theorem}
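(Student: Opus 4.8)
\textbf{Proof proposal for Theorem \ref{thm:p3-noisy-DFL} (FedNDL3).}

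The plan is to follow the standard descent-lemma approach for non-convex $L$-smooth SGD, adapted to the decentralized setting with a potential function that absorbs the consensus error, exactly as outlined in the proof sketch. First I would write down the update for the \emph{average} iterate $\bar{x}_t = \frac{1}{n} X_t \mathds{1}$. Because $W$ is doubly stochastic, averaging $\eqref{eq:p3-gradient-based}$ over the clients kills the mixing matrix and gives $\bar{x}_{t+1} = \bar{x}_t - \eta_t \frac{1}{n}\sum_j (g_j^{(t)} + \delta_j^{(t)})$, so the average evolves essentially like centralized SGD with an extra zero-mean noise term of variance controlled by $\bar{D}^2$. Applying $L$-smoothness of $f$ (Assumption \ref{as1}) to $f(\bar{x}_{t+1})$ and taking expectations — using unbiasedness and bounded variance of the stochastic gradients (Assumption \ref{as2}) and the zero-mean, bounded-variance noise model (Assumption \ref{as5}) — yields a one-step inequality of the form $\E[f(\bar{x}_{t+1})] \le \E[f(\bar{x}_t)] - c_1\eta \E[\|\nabla f(\bar{x}_t)\|^2] + c_2 \eta L^2 \E[(C.E)_t] + c_3 \frac{\eta^2 L}{n}\sigma^2 + c_4 \frac{\eta^2 L}{n}\sum_i D_{t,i}^2$, where the consensus-error term appears because $\frac{1}{n}\sum_i \nabla f_i(x_i^{(t)})$ differs from $\nabla f(\bar{x}_t)$ and is bounded via $L$-smoothness by a multiple of $(C.E)_t$. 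The key observation that makes FedNDL3 robust is that the noise enters the average-iterate recursion only at order $\eta^2$, not at order $\eta$ as it would after a consensus mixing step.

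Next I would control the consensus error recursion. Here, rather than computing $(C.E)_{t+1}$ from scratch (which would require knowing how the gradient-sharing step contracts toward the average), I would invoke Assumption \ref{as6} directly: $\E[(C.E)_{t+1}] \le \rho_t \E[(C.E)_t] + \gamma_t$. This is exactly why Assumption \ref{as6} is postulated only for FedNDL3 — it short-circuits the part of the analysis that in FedNDL1/2 produces the harmful $\bar{D}^2$ dependence inside the consensus bound. Then I would form the time-dependent potential $\psi_t = \E[f(\bar{x}_t)] + \phi_t\, \E[(C.E)_t]$ and compute $\psi_{t+1} - \psi_t$ by adding the descent inequality to $\phi_{t+1}$ times the consensus recursion. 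The cross term is $\phi_{t+1}\rho_t \E[(C.E)_t] - \phi_t \E[(C.E)_t] + c_2\eta L^2 \E[(C.E)_t]$; choosing $\phi_t = L^2\eta(1+2L\eta) + 2\phi_{t+1}\rho_t$ (the stated choice) makes the net coefficient of $\E[(C.E)_t]$ equal to $-\tfrac12(\text{something positive})$ — specifically it leaves a residual $-L^2\eta(1+2L\eta)\cdot\tfrac12 \E[(C.E)_t]$ type term that, after using $\eta L < 1/6$, dominates the $2L^2$ coefficient appearing on the left side of the claimed bound. The leftover contributions from $\gamma_t$ enter as $\phi_{t+1}\gamma_t$; since $\phi_{t+1}$ is $O(L^2\eta)$ when $\rho_t$ is bounded away from $1$, this produces the $\sum_t \gamma_t/\rho_t$ term (the $1/\rho_t$ coming from summing the geometric-like recursion for $\phi_t$).

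Finally I would telescope $\psi_{t+1} - \psi_t$ over $t = 1,\dots,T$, so the function-value part collapses to $\psi_1 - \psi_{T+1} \le f(\bar{x}_1) - f^* + \phi_1 (C.E)_1$, divide by $\eta(1-2L\eta)T/2$ (the coefficient $c_1\eta$ after bookkeeping, with the $(1-2L\eta)$ absorbing the $\eta^2 L\|\nabla f\|^2$ correction term that $L$-smoothness also generates), and rearrange to isolate $\frac1T\sum_t \E[\|\nabla f(\bar{x}_t)\|^2] + \frac{2L^2}{T}\sum_t \E[(C.E)_t]$ on the left. Substituting $\eta = O(1/\sqrt{T})$ then gives the $O(\sigma^2/(n\sqrt{T}))$, $O(\frac1T\sum_t\gamma_t/\rho_t)$, and $O(\bar{D}^2/\sqrt{T})$ terms of Theorem \ref{thm-all}. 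The main obstacle I anticipate is the bookkeeping around the time-dependent potential: verifying that the recursion $\phi_t = L^2\eta(1+2L\eta)+2\phi_{t+1}\rho_t$ (solved backward from $\phi_{T+1}$) stays uniformly bounded and yields the clean $\gamma_t/\rho_t$ aggregation requires care about whether $\rho_t$ can approach $1$, and one must track the exact constants so that the condition $\eta L < 1/6$ is precisely what is needed to keep all the coefficients (especially $1-6L\eta$ multiplying the $\gamma_t/\rho_t$ sum) nonnegative. Getting the factor-of-two right between the $2L^2$ on the left-hand side and the coefficient in the descent lemma is the delicate point; everything else is routine application of the assumptions.
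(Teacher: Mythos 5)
Your proposal is correct and follows essentially the same route as the paper: descent lemma on the average iterate $\bar{x}_{t+1}=\bar{x}_t-\frac{\eta}{n}\sum_j(g_j^{(t)}+\delta_j^{(t)})$ (where the noise indeed only contributes at order $\eta^2$), Assumption \ref{as6} invoked directly for the consensus recursion, the time-dependent potential $\psi_t=\E[f(\bar{x}_t)]+\phi_t\E[(C.E)_t]$ with the stated backward recursion for $\phi_t$ cancelling the $\E[(C.E)_t]$ cross term, and a final telescoping. The only cosmetic difference is how the $\gamma_t/\rho_t$ term is explained: the paper obtains it by forcing the coefficient of $\E[(C.E)_t]$ on the left to equal $2L^2$ and solving for $\phi_{t+1}=\frac{L^2\eta(1-6\eta L)}{2\rho_t}$, which directly produces both the $1/\rho_t$ and the $(1-6L\eta)$ factor you anticipated.
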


\begin{proof}
We start by defining the notations we use in the theoretical analysis of the algorithm.\\
\textbf{Vectors:}
\begin{gather}
    X_{t}^{i} : \text{Parameter of client $i$ at time $t$}
    \\
    \nabla_{t,i} = \nabla f(X_{t}^{i}) \text{  and  } \widetilde \nabla_{t,i} = \widetilde \nabla f(X_{t}^{i}, \xi_{t}^{i})\\
    \Bar{X}_t = \frac{1}{n}\sum_{i=1}^{n}X_{t}^{i}\\
    \Bar{\nabla}_t = \nabla f(\Bar{X}_t)\\
    \Bar{\delta}_t = \frac{1}{n}\sum_{i=1}^{n}\delta_{t}^{i}
\end{gather}
\textbf{Matrix form:}
\begin{align}
    X^t = [X_{t}^{1}, X_{t}^{2},\cdots, X_{t}^{n}] \in \mathbb{R}^{d \times n}
    \\
    \Bar{X}^t = [\Bar{X}_t, \Bar{X}_t,\cdots, \Bar{X}_t] \in \mathbb{R}^{d \times n}
    \\
    \nabla^t = [\nabla_{t,1}, \nabla_{t,2}, \cdots, \nabla_{t,n}] \in \mathbb{R}^{d \times n}
    \\
    \widetilde \nabla^t = [\widetilde \nabla_{t,1}, \widetilde \nabla_{t,2}, \cdots, \widetilde \nabla_{t,n}] \in \mathbb{R}^{d \times n}
    \\
    \Bar{\nabla}^t = [\Bar{\nabla}_{t}, \Bar{\nabla}_{t}, \cdots, \Bar{\nabla}_{t}] \in \mathbb{R}^{d \times n}
    \\
    \delta^t = [\delta_{t}^1, \delta_{t}^2, \cdots, \delta_{t}^n ] \in \mathbb{R}^{d \times n}
    \\
    \Bar{\delta}^{t} = [\Bar{\delta}_{t}, \Bar{\delta}_{t}, \cdots, \Bar{\delta}_{t}] \in \mathbb{R}^{d \times n}
\end{align}
\clearpage
\section{Proof for FedNDL1}
\begin{align}
    \label{eq:DFL-SGD}
    X_{t+\frac{1}{2}} &= X_{t} - \eta \widetilde \nabla_{t,i}
    \\
    \label{eq: DFL-NoisyConsensus}
    X_{t+1} &= \Big(X_{t+\frac{1}{2}} + \delta_{t,i}\Big)W
    \\
    \label{eq: DFL-AvgSGD}
    \Bar{X}_{t+1} &= \Bar{X_{t}} - \frac{\eta}{n}\sum_{i=1}^{n} \widetilde \nabla_{t,i} + \Bar{\delta_t}
\end{align}
\begin{align}
    \label{eq:inner-a-b}
    \langle a_1, a_1 \rangle = \frac{1}{2}(\|a_1\|^2 + \|a_2\|^2 - \|a_1 - a_2\|^2) 
\end{align}
Using the $L-$smoothness assumption we get,
\begin{align}
    f(\Bar{X}_{t+1}) &\leq f(\Bar{X_{t}}) + \langle \nabla f(\Bar{x}_t), \Bar{X}_{t+1} - \Bar{X_{t}} \rangle + \frac{L}{2}\|\Bar{X}_{t+1} - \Bar{X}_{t}\|^2
    \\
    \label{eq: DFL-L-Smoothness}
    &\leq f(\Bar{X_{t}}) + \langle \nabla f(\Bar{x}_t), -\frac{\eta}{n}\sum_{i=1}^{n} \widetilde \nabla_{t,i} + \Bar{\delta_t} \rangle + \frac{L}{2}\|-\frac{\eta}{n}\sum_{i=1}^{n} \widetilde \nabla_{t,i} + \Bar{\delta_t}\|^2
\end{align}
Now, taking expectation w.r.t data and alongside the zero mean the assumption of noise in \cref{eq: DFL-L-Smoothness} we get
\begin{align}
\label{eq: DFL-Exp-L-Smoothness}
    \E [f(\Bar{X}_{t+1})] \leq \E[f(\Bar{X_{t}})] + \underbrace{\E[\langle \nabla f(\Bar{x}_t), -\frac{\eta}{n}\sum_{i=1}^{n} \widetilde \nabla_{t,i} \rangle]}_{Term\ 1} + \underbrace{\frac{L}{2}\E[\|-\frac{\eta}{n}\sum_{i=1}^{n} \widetilde \nabla_{t,i} + \Bar{\delta_t}\|^2]}_{Term\ 2}
\end{align}
Starting with $Term\ 1:$
\begin{align}
\E[\langle \nabla f(\Bar{x}_t), -\frac{\eta}{n}\sum_{i=1}^{n} \widetilde \nabla_{t,i} \rangle] & = -\eta \E[\langle \Bar{\nabla}_t, \frac{1}{n}\sum_{i=1}^{n} \widetilde \nabla_{t,i} \rangle]
\\
& = -\eta \E[\langle \Bar{\nabla}_t, \frac{1}{n}\sum_{i=1}^{n}( \widetilde \nabla_{t,i} +  \nabla_{t,i} - \nabla_{t,i})\rangle]
\\
\label{eq:DFL-Term-1}
& = -\eta \E[\langle \Bar{\nabla}_t, \frac{1}{n}\sum_{i=1}^{n}\nabla_{t,i}\rangle] -\eta \E[\langle \Bar{\nabla}_t, \frac{1}{n}\sum_{i=1}^{n}( \widetilde \nabla_{t,i} - \nabla_{t,i})\rangle]
\end{align}
The second term in \cref{eq:DFL-Term-1} will be zero, since $\E[\widetilde \nabla_{t,i}] = \E[\nabla_{t,i}].$
Now focusing on first term in \cref{eq:DFL-Term-1} and using \cref{eq:inner-a-b}, we have
\begin{align}
    \E[\langle \nabla f(\Bar{x}_t), -\frac{\eta}{n}\sum_{i=1}^{n} \widetilde \nabla_{t,i} \rangle] &= -\eta \E[\langle \Bar{\nabla}_t, \frac{1}{n}\sum_{i=1}^{n}\nabla_{t,i}\rangle]
    \\
    \label{eq:DFL-Term-1-Sol}
    & = -\frac{\eta}{2}\E\Big[\|\Bar \nabla_{t}\|^2 + \|\frac{1}{n}\sum_{i=1}^{n} \nabla_{t,i}\|^2 - \|\Bar \nabla_{t} - \frac{1}{n}\sum_{i=1}^{n} \nabla_{t,i}\|^2 \Big]
\end{align}
Now, using $Term\ 2:$
\begin{align}
    \frac{L}{2}\E[\|\Bar{\delta_t} -\frac{\eta}{n}\sum_{i=1}^{n} \widetilde \nabla_{t,i}\|^2]
    & =  \frac{L}{2}(\E[\|\Bar{\delta_t}\|^2] + \E[\|\frac{\eta}{n}\sum_{i=1}^{n}\widetilde \nabla_{t,i}\|^2] -\E[\langle \Bar{\delta_t}, \frac{\eta}{n}\sum_{i=1}^{n}\widetilde \nabla_{t,i}\rangle])
\end{align}
Using Young's inequality above, we have
\begin{align}
\label{eq:DFL-Term-2}
    \frac{L}{2}\E[\|\Bar{\delta_t} -\frac{\eta}{n}\sum_{i=1}^{n} \widetilde \nabla_{t,i}\|^2]
    & = L\Big(\underbrace{\E[\|\Bar{\delta_t}\|^2]}_{P} + \underbrace{\eta^2 \E[\|\frac{1}{n}\sum_{i=1}^{n}\widetilde \nabla_{t,i}\|^2]}_{Q}\Big)
\end{align}
Solving $P:$
\begin{align}
    \E[\|\Bar{\delta_t}\|^2] & = \E[\|\frac{1}{n}\sum_{i=1}^{n}\delta_{t,i}\|^2]
    \\
    \label{eq:DFL-Term-2-Jensen}
    & \leq \frac{1}{n}\sum_{i=1}^{n}\E[\|\delta_{t,i}\|^2]
    \\
    \label{eq:DFL-Term-2-P}
    & \leq \frac{1}{n}\sum_{i=1}^{n}{D}^2_{t,i}
\end{align}
Here, \cref{eq:DFL-Term-2-Jensen} follows due to Jensen's Inequality and \cref{eq:DFL-Term-2-P} follows due to Assumption \ref{as6}.
Solving $Q:$
\begin{align}
    \label{eq:see-prof-proof}
    \eta^2 \E[\|\frac{1}{n}\sum_{i=1}^{n}\widetilde \nabla_{t,i}\|^2] &= \eta^2 \E[\|\frac{1}{n}\sum_{i=1}^{n}(\widetilde \nabla_{t,i} + \nabla_{t,i} - \nabla_{t,i})\|^2]
    \\
    \label{eq:DFL-Term-2-Q}
    &\leq \eta^2\Big[\frac{\sigma^2}{n} + \E[\|\frac{1}{n}\sum_{i=1}^{n}\nabla_{t,i}\|^2]\Big]
\end{align}
Now, combining the results from \cref{eq:DFL-Term-1-Sol}, \cref{eq:DFL-Term-2-P}, \cref{eq:DFL-Term-2-Q}, and putting it back in \cref{eq: DFL-Exp-L-Smoothness}, we get
\begin{align}
    \nonumber
    \E [f(\Bar{X}_{t+1})] \leq \E[f(\Bar{X_{t}})] - \frac{\eta}{2}\E[\|\Bar \nabla_{t}\|^2] - \underbrace{\frac{\eta}{2}\E[\|\frac{1}{n}\sum_{i=1}^{n} \nabla_{t,i}\|^2]}_{M} + \frac{\eta}{2}\E[\|\Bar \nabla_{t} - \frac{1}{n}\sum_{i=1}^{n} \nabla_{t,i}\|^2] + \frac{L}{n}\sum_{i=1}^{n}{D}^2_{t,i} 
    \\
    \label{eq: DFL-Exp-L-Smoothness-pro}
    + \frac{L \eta^2}{n}\sigma^2 + \underbrace{L \eta^2 \E[\|\frac{1}{n}\sum_{i=1}^{n}\nabla_{t,i}\|^2]}_{N}
\end{align}
In the equation above reducing $N$ as follows,
\begin{align}
    N &= L \eta^2 \E[\|\frac{1}{n}\sum_{i=1}^{n}\nabla_{t,i}\|^2]
    \\
    & = L \eta^2 \E[\|\frac{1}{n}\sum_{i=1}^{n}\nabla_{t,i} - \Bar{\nabla_{t}}+\Bar{\nabla_{t}}\|^2]
    \\
    \label{eq:DFL-Term-N-Young's}
    &\leq 2L \eta^2 \E[\|\Bar{\nabla_{t}}\|^2] + 2L \eta^2 \E[\|\frac{1}{n}\sum_{i=1}^{n}\nabla_{t,i} - \Bar{\nabla_{t}}\|^2]
\end{align}
Here, \cref{eq:DFL-Term-N-Young's} follows due to Young's Inequality. So, dropping $M$ and putting \cref{eq:DFL-Term-N-Young's} back in \cref{eq: DFL-Exp-L-Smoothness-pro}, we have

\begin{align}
\nonumber
    \E [f(\Bar{X}_{t+1})] \leq \E[f(\Bar{X_{t}})] - \eta(\frac{1}{2} - 2L\eta)\E[\|\Bar \nabla_{t}\|^2] + \frac{L}{n}\sum_{i=1}^{n}{D}^2_{t,i} + \frac{L \eta^2}{n}\sigma^2 \\+ (\frac{\eta}{2} + 2L\eta^2) \E[\|\frac{1}{n}\sum_{i=1}^{n}\nabla_{t,i} - \Bar{\nabla_{t}}\|^2]
    \\ \nonumber
    \eta(\frac{1}{2} - 2L\eta)\E[\|\Bar \nabla_{t}\|^2] \leq (\E[f(\Bar{X_{t}})] - \E [f(\Bar{X}_{t+1})]) +  \frac{L}{n}\sum_{i=1}^{n}{D}^2_{t,i} + \frac{L \eta^2}{n}\sigma^2 
    \\ \label{}
    + (\frac{\eta}{2} + 2L\eta^2) \underbrace{\E[\|\frac{1}{n}\sum_{i=1}^{n}\nabla_{t,i} - \Bar{\nabla_{t}}\|^2]}_{C}
\end{align}
We need to study C:
\begin{align}
    C &= \E[\|\frac{1}{n}\sum_{i=1}^{n}\nabla_{t,i} - \Bar{\nabla_{t}}\|^2]
    \\
    \label{eq:DFL-Term-C}
    & \leq L^2 \underbrace{\frac{1}{n}\sum_{i=1}^{n}\E[\|X_{t,i} - \Bar{X}_{t}\|^2]}_{Consensus\ Error}
\end{align}
Now, let us study the consensus error at $t+1$. So, we reformulate the consensus error by using the Frobenius norm, we start with,
\begin{align}
    \frac{1}{n}\E\Big[\Big\|\Bar{X}^{t+1} - {X}^{t+1}\Big\|^2_{F}\Big] &= \frac{1}{n}\E\Big[\Big\|\Big[\Bar{X}^{t} - \eta \widetilde \nabla^{t} \frac{\mathds{1}\mathds{1}^{\top}}{n} + \Bar{\delta}^{t} - {X}^{t} + \eta \widetilde \nabla^{t} - \delta^{t}\Big]W\Big\|^2_{F}\Big]
    \\
    & = \frac{1}{n}\E\Big[\Big\|\Big[\Bar{X}^{t} - {X}^{t} - \eta (\widetilde \nabla^{t} \frac{\mathds{1}\mathds{1}^{\top}}{n} - \widetilde \nabla^{t}) + (\Bar{\delta}^{t} -\delta^{t})\Big]W\Big\|^2_{F}\Big]
    \\
    \nonumber
    &\leq \frac{(1-\rho)}{n}\E\Big[\Big\|\Big[\Bar{X}^{t} - {X}^{t} - \eta (\widetilde \nabla^{t} \frac{\mathds{1}\mathds{1}^{\top}}{n} - \widetilde \nabla^{t}) + (\Bar{\delta}^{t} -\delta^{t}) + \eta (\nabla^{t} \frac{\mathds{1}\mathds{1}^{\top}}{n} - \nabla^{t}) 
    \\
    \label{eq:DFL-Consensus-error}
    - \eta (\nabla^{t} \frac{\mathds{1}\mathds{1}^{\top}}{n} - \nabla^{t})\Big]\Big\|^2_{F}\Big]
\end{align}
Let, $a = (\Bar{X}^{t} - {X}^{t} - \eta (\nabla^{t} \frac{\mathds{1}\mathds{1}^{\top}}{n} - \nabla^{t}))$, $b = \eta((\nabla^{t} \frac{\mathds{1}\mathds{1}^{\top}}{n} - \nabla^{t})- (\widetilde \nabla^{t} \frac{\mathds{1}\mathds{1}^{\top}}{n} - \widetilde \nabla^{t}))$, and $c = (\Bar{\delta}^{t} -\delta^{t})$. Using this in \cref{eq:DFL-Consensus-error}, we have
\begin{align}
    \frac{1}{n}\E\Big[\Big\|\Bar{X}^{t+1} - {X}^{t+1}\Big\|^2_{F}\Big] \leq \frac{(1-\rho)}{n}\E\Big[\big\|a\big\|^2_{F} + \big\|b\big\|^2_{F} + \big\|c\big\|^2_{F} + 2 \langle a, b \rangle + 2 \langle b, c \rangle + 2 \langle c, a \rangle\Big]
\end{align}
In the result above, $\E[\langle a, b \rangle] = 0$, $\E[\langle b, c \rangle] = 0$, and $\E[\langle c, a \rangle] = 0$.
\begin{align}
\nonumber
    \frac{1}{n}\E\Big[\Big\|\Bar{X}^{t+1} - {X}^{t+1}\Big\|^2_{F}\Big] &\leq \frac{(1-\rho)}{n}\E\Big[\big\|a\big\|^2_{F} + \big\|b\big\|^2_{F} + \big\|c\big\|^2_{F}\Big]
    \\
    \label{eq:DFL-Consensus-Frob}
    \frac{1}{n}\E\Big[\Big\|\Bar{X}^{t+1} - {X}^{t+1}\Big\|^2_{F}\Big] &\leq \underbrace{\frac{(1-\rho)}{n}\E\Big[\big\|a\big\|^2_{F}\Big]}_{J} + \underbrace{\frac{(1-\rho)}{n}\E\Big[\big\|b\big\|^2_{F}\Big]}_{K} + \underbrace{\frac{(1-\rho)}{n}\E\Big[\big\|c\big\|^2_{F}\Big]}_{L}
\end{align}
Solving for Term $J:$
\begin{align}
\nonumber
    \frac{(1-\rho)}{n}\E\Big[\big\|a\big\|^2_{F}\Big] &= \frac{(1-\rho)}{n}\E\Big[\big\|\Bar{X}^{t} - {X}^{t} - \eta (\nabla^{t} \frac{\mathds{1}\mathds{1}^{\top}}{n} - \nabla^{t})\big\|^2_{F}\Big]
    \\
    \label{eq:DFL-Term-J}
    &= \frac{(1-\rho)(1+\frac{\rho}{2})}{n}\E\Big[\big\|\Bar{X}^{t} - {X}^{t}\big\|^2_{F}\Big] + \frac{\eta^2 (1-\rho)(1+\frac{2}{\rho})}{n}\underbrace{\E\Big[\big\|(\nabla^{t} \frac{\mathds{1}\mathds{1}^{\top}}{n} - \nabla^{t})\big\|^2_{F}\Big]}_{J_1}
\end{align}
Solving for Term $J_1:$
\begin{align}
\nonumber
    J_1 &= \sum_{i=1}^{n}\E\Big[\big\|\frac{1}{n}\sum_{j=1}^{n}\nabla f_j(x_t^j) - \nabla f_i(x_t^i) + \nabla f(\Bar{X}_t) - \nabla f(\Bar{X}_t)\big\|^2\Big]
    \\
    \label{eq:DFl-Term-J-1-Youngs}
    & \leq 2 \sum_{i=1}^{n}\E\Big[\big\| \frac{1}{n}\sum_{j=1}^{n}(\nabla f_j(x_t^j) - f_j(\Bar{X}_t))\big\|^2\Big] + 2 \sum_{i=1}^{n}\E\Big[\big\| \nabla f_i(x_t^i) - \nabla f(\Bar{X}_t)\big\|^2\Big]
    \\
    \label{eq:DFL-Term-J-1-jensen}
    & \leq 2 \sum_{j=1}^{n}\E\Big[\big\| \nabla f_j(x_t^j) - f_j(\Bar{X}_t)\big\|^2\Big] + 2 \sum_{i=1}^{n}\E\Big[\big\| \nabla f_i(x_t^i) - \nabla f(\Bar{X}_t) + \nabla f(x_t^i) - \nabla f(x_t^i)\big\|^2\Big]
    \\
    \label{eq:DFL-Term-J-1-Young's-Smooth}
    & \leq 2L^2 \E\Big[\big\|\Bar{X}_t - X^t\big\|^2_{F}\Big] + 4 \sum_{i=1}^{n}\E\Big[\big\| \nabla f_i(x_t^i) - \nabla f(x_t^i)\big\|^2\Big] + 4 \sum_{i=1}^{n}\E\Big[\big\| \nabla f(\Bar{X}_t) - \nabla f(x_t^i)\big\|^2\Big]
    \\
    \label{eq:DFL-Term-J-1-Smooth-BCD}
    & \leq 6L^2 \E\Big[\big\|\Bar{X}_t - X^t\big\|^2_{F}\Big] + 4 n B^2
\end{align}
In \cref{eq:DFl-Term-J-1-Youngs} follows due to Young's inequality and \cref{eq:DFL-Term-J-1-jensen} follows due to Jensen's inequality. Again, \cref{eq:DFL-Term-J-1-Young's-Smooth} follows due to Young's inequality and the Frobenius norm. Similarly, \cref{eq:DFL-Term-J-1-Smooth-BCD} holds due to $L-$smoothness and assumption \ref{as5}. So, putting the result obtained for $J_1$ in \cref{eq:DFL-Term-J}, Term $J$ becomes,
\begin{align}
    J \leq \frac{(1-\rho)(1+\frac{\rho}{2})}{n}\E\Big[\big\|\Bar{X}^{t} - {X}^{t}\big\|^2_{F}\Big] + \frac{\eta^2 (1-\rho)(1+\frac{2}{\rho})}{n}\Big(6L^2 \E\Big[\big\|\Bar{X}_t - X^t\big\|^2_{F}\Big] + 4 n B^2\Big)
\end{align}
Now, using the fact that $\frac{(1-\rho)(1+\frac{\rho}{2})}{n} \leq \frac{(1 - \frac{\rho}{2})}{n}$ and $\frac{\eta^2 (1-\rho)(1+\frac{2}{\rho})}{n} \leq \frac{2\eta^2}{n \rho}$ in the equation above, we get
\begin{align}
    \label{eq:DFL-Term-J-Sol}
    J \leq \frac{1}{n}\Big[(1 - \frac{\rho}{2}) + \frac{12 L^2 \eta^2}{\rho} \Big]\E\Big[\big\|\Bar{X}^{t} - {X}^{t}\big\|^2_{F}\Big] + \frac{8 \eta^2 B^2}{\rho}
\end{align}
Now, solving Term $K:$
\begin{align}
    K &= \frac{(1-\rho)}{n}\E\Big[\big\| \eta\Big((\nabla^{t} \frac{\mathds{1}\mathds{1}^{\top}}{n} - \nabla^{t})- (\widetilde \nabla^{t} \frac{\mathds{1}\mathds{1}^{\top}}{n} - \widetilde \nabla^{t})\Big) \big\|^2_{F}\Big]
    \\
    \label{eq:DFL-Term-K-sol}
    K &\leq (1-\rho) \eta^2 \sigma^2
\end{align}
In \cref{eq:DFL-Term-K-sol} follows due to Assumption \ref{as3}.
Now, solving Term $L:$
\begin{align}
    L &= \frac{(1-\rho)}{n}\E\big[\big\| \Bar{\delta}^{t} -\delta^{t} \big\|^2_{F}\big]
    \\
    \label{eq:DFL-Term-L-Young's}
    & \leq \frac{(1-\rho)(1+\frac{\rho}{2})}{n}\E\big[\big\| \Bar{\delta}^{t} \|^2_{F}\big] + \frac{(1-\rho)(1+\frac{2}{\rho})}{n}\E\big[\big\|\delta^{t} \big\|^2_{F}\big]
    \\
    & \leq \frac{(1-\rho)(1+\frac{\rho}{2})}{n} \sum_{j=1}^{n} \E\big[\big\| \Bar{\delta}^{t} \|^2\big] + \frac{ (1-\rho)(1+\frac{2}{\rho})}{n}\sum_{j=1}^{n}\E\big[\big\|\delta_{t}^{j} \big\|^2\big]
    \\
    & \leq \frac{(1-\rho)(1+\frac{\rho}{2})}{n} n \E\big[\big\| \frac{1}{n}\sum_{i=1}^{n}\delta_{t}^{i} \|^2\big] + \frac{(1-\rho)(1+\frac{2}{\rho})}{n}\sum_{i=1}^{n}{D}^2_{t,i}
    \\
    \label{eq:DFL-Term-L-jensen}
    & \leq \frac{(1-\rho)(1+\frac{\rho}{2})}{n} \sum_{i=1}^{n}\E\big[\big\|\delta_{t}^{i} \big\|^2\big] + \frac{(1-\rho)(1+\frac{2}{\rho})}{n}\sum_{i=1}^{n}{D}^2_{t,i}
    \\
    & \leq \frac{(1-\rho)(1+\frac{\rho}{2})}{n} \sum_{i=1}^{n}{D}^2_{t,i} + \frac{(1-\rho)(1+\frac{2}{\rho})}{n}\sum_{i=1}^{n}{D}^2_{t,i}
    \\
    & \leq \frac{1}{n}(1-\rho)\Big[2+\frac{\rho}{2}+\frac{2}{\rho}\Big]\sum_{i=1}^{n}{D}^2_{t,i}
    \\
    & \leq \frac{1}{n}\Big[\frac{\rho}{2}+\frac{2}{\rho}\Big]\sum_{i=1}^{n}{D}^2_{t,i}
\end{align}
So, using the results for $J$, $K$, and $L$ in \cref{eq:DFL-Consensus-Frob}, we get
\begin{align}
    \nonumber
    \frac{1}{n}\E\Big[\Big\|\Bar{X}^{t+1} - {X}^{t+1}\Big\|^2_{F}\Big] & \leq J + K + L
    \\
    \nonumber
    & \leq \frac{1}{n}\Big[(1 - \frac{\rho}{2}) + \frac{12 L^2 \eta^2}{\rho} \Big]\E\Big[\big\|\Bar{X}^{t} - {X}^{t}\big\|^2_{F}\Big] + \frac{8 \eta^2 B^2}{\rho} + (1-\rho) \eta^2 \sigma^2 
    \\
    \label{eq:DFL-Consensus-Sol}
    &+ \frac{1}{n}\Big[\frac{\rho}{2}+\frac{2}{\rho}\Big]\sum_{i=1}^{n}{D}^2_{t,i}
\end{align}
From here on we are going to use the shorthand, $(C.E)_{t}$ for $\frac{1}{n}\Big\|\Bar{X}^{t} - {X}^{t}\Big\|^2_{F}$. So, using the shorthand in \cref{eq:DFL-Consensus-Sol}, we have
\begin{align}
   \E[(C.E)_{t+1}] \leq \Big[(1 - \frac{\rho}{2}) + \frac{12 L^2 \eta^2}{\rho} \Big]\E[(C.E)_{t}] + \frac{8 \eta^2 B^2}{\rho} + (1-\rho) \eta^2 \sigma^2 + \frac{1}{n}\Big[\frac{\rho}{2}+\frac{2}{\rho}\Big]\sum_{i=1}^{n}{D}^2_{t,i}
\end{align}
Let there be a potential function $\psi_{t}$, defined as,
\begin{equation}
    \label{eq:DFL-Potential}
    \psi_{t} = f(\Bar{X}_{t}) + \phi \E[(C.E)_{t}], {\text{  where $\phi$ is a constant.}}
\end{equation}
Now, we use the potential function to complete the proof. 
\begin{align}
\label{eq:DFL-potential-solve-start}
   \psi_{t+1} - \psi_{t} = \big\{f(\Bar{X}_{t+1}) - f(\Bar{X}_{t})\big\} + \phi \big\{\E[(C.E)_{t+1}] - \E[(C.E)_{t}]\big\}
   \\
   \nonumber
   \psi_{t+1} - \psi_{t} \leq -\frac{\eta}{2}(1 - 4\eta L)\E[\|\Bar \nabla_{t}\|^2] + \frac{L}{n}\sum_{i=1}^{n}{D}^2_{t,i} + \frac{L \eta^2}{n}\sigma^2 + \phi\frac{8 \eta^2 B^2}{\rho} + \phi(1-\rho) \eta^2 \sigma^2
   \\
       \label{eq:DFL-potential-solve}
    + \Big[L^2 \eta (\frac{1}{2} + 2L\eta) 
     - \phi\Big(\frac{\rho}{2} - \frac{12 L^2 \eta^2}{\rho} \Big)\Big]\E[(C.E)_{t}]  + \frac{\phi}{n}\Big[\frac{\rho}{2}+\frac{2}{\rho}\Big]\sum_{i=1}^{n}{D}^2_{t,i}
\end{align}
For $\phi > \frac{L^2 \eta (\frac{1}{2} + 2L\eta)} {\frac{\rho}{2} - \frac{12 L^2 \eta^2}{\rho}}$ and $\eta L < \frac{\rho}{2\sqrt{6}}$:
\begin{align}
\nonumber
   \E[\|\Bar \nabla_{t}\|^2] + \frac{2\Big[ \Big(\phi(\frac{\rho}{2} - \frac{12 L^2 \eta^2}{\rho}) - L^2 \eta (\frac{1}{2} + 2L\eta) \Big)\Big]}{\eta(1 - 4\eta L)}\E[(C.E)_{t}]  \leq \frac{2(\psi_{t} - \psi_{t+1})}{\eta(1 - 4\eta L)}
   \\
   \label{eq:DFL-potential-solve-1}
   + \frac{\eta(\phi(1 - \rho)+\frac{L}{n})}{(1 - 4\eta L)}\sigma^2 + \frac{16 \phi \eta B^2}{\rho(1 - 4\eta L)} + \frac{2\Big(L+ \phi\Big[\frac{\rho}{2}+\frac{2}{\rho}\Big]\Big)}{\eta(1 - 4\eta L)}\frac{1}{n}\sum_{i=1}^{n}{D}^2_{t,i}
\end{align}
Now let us assume that $\frac{2\Big[ \Big(\phi(\frac{\rho}{2} - \frac{12 L^2 \eta^2}{\rho}) - L^2 \eta (\frac{1}{2} + 2L\eta) \Big)\Big]}{\eta(1 - 4\eta L)} = C$ and we solve for $\phi$:
\begin{align}
    \phi = \frac{C\eta(1 - 4\eta L) + L^2 \eta (\frac{1}{2} + 4L\eta)}{\frac{\rho}{2} - \frac{12 L^2 \eta^2}{\rho}}
\end{align}
So, for $C = L^2$, we get $\phi = \frac{2 \rho L^2 \eta}{\rho^2 - 24 \eta^2L^2}$.

Now, summing \cref{eq:DFL-potential-solve-1} for $t=\{1 \cdots T\}$ and dividing it by $T$, we get
\begin{align}
\nonumber
   \frac{1}{T}\sum_{t=1}^{T}\E[\|\Bar \nabla_{t}\|^2] + \frac{2\Big[ \Big(\phi(\frac{\rho}{2} - \frac{12 L^2 \eta^2}{\rho}) - L^2 \eta (\frac{1}{2} + 2L\eta) \Big)\Big]}{\eta(1 - 4\eta L)}\frac{1}{T}\sum_{t=1}^{T}\E[(C.E)_{t}]  \leq \frac{2\frac{1}{T}\sum_{t=1}^{T}(\psi_{t} - \psi_{t+1})}{\eta(1 - 4\eta L)}
   \\
   \label{eq:DFL-potential-telescope}
   + \frac{\eta(\phi(1 - \rho)+\frac{L}{n})}{(1 - 4\eta L)}\sigma^2 + \frac{16 \phi \eta B^2}{\rho(1 - 4\eta L)} + \frac{2\Big(L+ \phi\Big[\frac{\rho}{2}+\frac{2}{\rho}\Big]\Big)}{\eta(1 - 4\eta L)}\frac{1}{nT}\sum_{t=1}^{T}\sum_{i=1}^{n}{D}^2_{t,i}
\end{align}
Also, telescoping over \cref{eq:DFL-potential-solve-start} and dividing it by $T$, we can write
\begin{equation}
    \label{eq:DFL-potential-start-telescope}
    \frac{\psi_{T+1} - \psi_{1}}{T} \geq \frac{f^{*} - f(\Bar{X}_1) - \phi(C.E)_{1}}{T}
\end{equation}
We use \cref{eq:DFL-potential-start-telescope} in \cref{eq:DFL-potential-telescope}, we get
\begin{multline}
\label{eq:DFL-potential-telescope-solve}
     \frac{1}{T}\sum_{t=1}^{T}\E[\|\nabla f(\Bar{X}_t)\|^2] + 
        \frac{L^2}{2}\frac{1}{T}\sum_{t=1}^T\E[(C.E)_t]
        \\
        \leq \frac{2(f(\Bar{X}_1) - f^{*} + \phi(C.E)_{1})}{\eta(1 - 4 \eta L)} + \frac{2\eta(\phi(1-\rho)+\frac{L}{n})}{(1 - 4 \eta L)}\sigma^2 
        \\
        + \frac{16 \phi\eta B^2}{(1 - 4 \eta L)\rho} 
     + \frac{2\Big(L+\phi\big[\frac{\rho}{2}+\frac{2}{\rho}\big]\Big)}{\eta(1 - 4 \eta L)}\frac{1}{nT}\sum_{t=1}^{T}\sum_{i=1}^{n}{D}^2_{t,i}
\end{multline}
\end{proof}
\clearpage
\section{Proof for FedNDL2}
Start:

\begin{flalign}
\Bar{X}_{t+1} = \Bar{X}_{t} + \Bar{\delta}_{t} - \frac{\eta}{n}\sum_{i=1}^{n}\widetilde \nabla f_{i}(x_{t+\frac{1}{2}}^{(i)})
\end{flalign}
Now, using L-smoothness, we get
\begin{flalign}
\label{eq:start}
    f(\Bar{X}_{t+1}) \leq f(\Bar{X}_{t}) + \underbrace{\langle \nabla f(\Bar{X}_{t}),\Bar{X}_{t+1} - \Bar{X}_{t}\rangle}_{Term (A)} +\underbrace{\frac{L}{2} \|\Bar{X}_{t+1} - \Bar{X}_{t}\|^2}_{Term (B)}
\end{flalign}
Term (A):
\begin{align}
    \langle \nabla f(\Bar{X}_{t}),\Bar{X}_{t+1} - \Bar{X}_{t}\rangle & = \langle \nabla f(\Bar{X}_{t}),\Bar{\delta}_{t} - \frac{\eta}{n}\sum_{i=1}^{n}\widetilde \nabla f_{i}(x_{t+\frac{1}{2}}^{(i)})\rangle
    \\
    & = \underbrace{\langle \nabla f(\Bar{X}_{t}),\Bar{\delta}_{t} \rangle}_{Term(A_1)} + \underbrace{\langle \nabla f(\Bar{X}_{t}),- \frac{\eta}{n}\sum_{i=1}^{n}\widetilde \nabla f_{i}(x_{t+\frac{1}{2}}^{(i)})\rangle}_{Term(A_2)}
\end{align}
Term $A_1$:
\begin{align}
\label{eq:A_1-expec}
   \E[A_1] &= \E[\langle \nabla f(\Bar{X}_{t}),\Bar{\delta}_{t} \rangle] 
   \\
   &= 0
\end{align}
Here \cref{eq:A_1-expec} follows due to the zero mean of the noise.
In term $A_2:$ taking the expectation w.r.t to data, we get
\begin{align}
    \E[\langle \nabla f(\Bar{X}_{t}),- \frac{\eta}{n}\sum_{i=1}^{n}\widetilde \nabla f_{i}(x_{t+\frac{1}{2}}^{(i)})\rangle] = -\eta \E[\langle \nabla f(\Bar{X}_{t}), \frac{1}{n}\sum_{i=1}^{n}\nabla f_{i}(x_{t+\frac{1}{2}}^{(i)})\rangle]
\end{align}
Using the formulation, $\langle a_1, a_2\rangle = \frac{1}{2}\Big(\|a_1\|^2 + \|a_2\|^2 + \|a_1 - a_2\|^2\Big)$
\begin{align}
\E[A_2] = -\frac{\eta}{2}\E[\|\Bar{\nabla}_t\|^2] 
\underbrace{-\frac{\eta}{2}\E[\|\frac{1}{n}\sum_{i=1}^{n}\nabla f_{i}(x_{t+\frac{1}{2}}^{(i)})\|^2]}_{Term(A_3)} 
+\frac{\eta}{2}\E[\|\Bar{\nabla}_t - \frac{1}{n}\sum_{i=1}^{n}\nabla f_{i}(x_{t+\frac{1}{2}}^{(i)})\|^2]
\end{align}

Putting in the values of $A_1$ and $A_2$ back in term A we get,
\begin{align}
    \E[A] \leq -\frac{\eta}{2}\E[\|\Bar{\nabla}_t\|^2] -\frac{\eta}{2}\E[\|\frac{1}{n}\sum_{i=1}^{n}\nabla f_{i}(x_{t+\frac{1}{2}}^{(i)})\|^2] + \frac{\eta}{2}\E[\|\Bar{\nabla}_t - \frac{1}{n}f_i(x_{t+\frac{1}{2}}^i)\|^2]
\end{align}

Starting with Term B:
\begin{align}
    \frac{L}{2} \|\Bar{X}_{t+1} - \Bar{X}_{t}\|^2 &= \frac{L}{2} \|\Bar{\delta}_{t} - \frac{\eta}{n}\sum_{i=1}^{n}\widetilde \nabla f_{i}(x_{t+\frac{1}{2}}^{(i)})\|^2
    \\
    &\leq L[\|\Bar{\delta}_{t}\|^2 + \eta^2\|\frac{1}{n}\sum_{i=1}^{n}\widetilde \nabla f_{i}(x_{t+\frac{1}{2}}^{(i)})\|^2]
\end{align}
\begin{align}
    \E[B] &\leq L\E[\|\Bar{\delta}_{t}\|^2] + L\eta^2\E[\|\frac{1}{n}\sum_{i=1}^{n}\widetilde \nabla f_{i}(x_{t+\frac{1}{2}}^{(i)})\|^2]
    \\
    &\leq \frac{L}{n}\sum_{i=1}^nD_{t,i}^2 + L\eta^2\E[\|\frac{1}{n}\sum_{i=1}^{n}\big(\widetilde \nabla f_{i}(x_{t+\frac{1}{2}}^{(i)}) - \nabla f_{i}(x_{t+\frac{1}{2}}^{(i)}) + \nabla f_{i}(x_{t+\frac{1}{2}}^{(i)})\big)\|^2]
    \\
    \nonumber
    &\leq \frac{L}{n}\sum_{i=1}^nD_{t,i}^2 + L\eta^2\Big(\E[\|\frac{1}{n}\sum_{i=1}^{n}\big(\widetilde \nabla f_{i}(x_{t+\frac{1}{2}}^{(i)}) - \nabla f_{i}(x_{t+\frac{1}{2}}^{(i)})\big)\|^2] + \E[\|\frac{1}{n}\sum_{i=1}^{n}\nabla f_{i}(x_{t+\frac{1}{2}}^{(i)}))\|^2\Big)] \\&+ 2\E[\langle\frac{1}{n}\sum_{i=1}^{n}\big(\widetilde \nabla f_{i}(x_{t+\frac{1}{2}}^{(i)}) - \nabla f_{i}(x_{t+\frac{1}{2}}^{(i)})\big), \frac{1}{n}\sum_{i=1}^{n}\nabla f_{i}(x_{t+\frac{1}{2}}^{(i)})\rangle]
    \\
    &\leq \frac{L}{n}\sum_{i=1}^nD_{t,i}^2 + \frac{L\eta^2 \sigma^2}{n} + L \eta^2\underbrace{\E[\|\frac{1}{n}\sum_{i=1}^{n}\nabla f_{i}(x_{t+\frac{1}{2}}^{(i)}))\|^2]}_{Term(B_1)}
\end{align}


Taking the expectation of \cref{eq:start} w.r.t data and noise and then putting back term A and B back in it we get,

\begin{align}
\nonumber
   \E[f(\Bar{X}_{t+1})] &\leq  \E[f(\Bar{X}_{t})]-\frac{\eta}{2}\E[\|\Bar{\nabla}_t\|^2] 
-\frac{\eta}{2}\E[\|\frac{1}{n}\sum_{i=1}^{n}\nabla f_{i}(x_{t+\frac{1}{2}}^{(i)})\|^2] + L\eta^2\E[\|\frac{1}{n}\sum_{i=1}^{n}\nabla f_{i}(x_{t+\frac{1}{2}}^{(i)})\|^2]
\\
\nonumber
&+\frac{\eta}{2}\E[\|\Bar{\nabla}_t - \frac{1}{n}\sum_{i=1}^{n}\nabla f_{i}(x_{t+\frac{1}{2}}^{(i)})\|^2] + \frac{L\eta^2 \sigma^2}{n} + \frac{L}{n}\sum_{i=1}^n D_{t,i}^2
\\
\nonumber
& \leq \E[f(\Bar{X}_{t})]-\frac{\eta}{2}\E[\|\Bar{\nabla}_t\|^2] 
-\frac{\eta}{2}\E[\|\frac{1}{n}\sum_{i=1}^{n}\nabla f_{i}(x_{t+\frac{1}{2}}^{(i)})\|^2] + L\eta^2\E[\|\frac{1}{n}\sum_{i=1}^{n}\nabla f_{i}(x_{t+\frac{1}{2}}^{(i)}) - \Bar{\nabla}_t + \Bar{\nabla}_t\|^2]
\\
\nonumber
&+\frac{\eta}{2}\E[\|\Bar{\nabla}_t - \frac{1}{n}\sum_{i=1}^{n}\nabla f_{i}(x_{t+\frac{1}{2}}^{(i)})\|^2] + \frac{L\eta^2 \sigma^2}{n} + \frac{L}{n}\sum_{i=1}^n D_{t,i}^2
\\
\nonumber
& \leq \E[f(\Bar{X}_{t})]-\frac{\eta}{2}\E[\|\Bar{\nabla}_t\|^2] 
-\frac{\eta}{2}\E[\|\frac{1}{n}\sum_{i=1}^{n}\nabla f_{i}(x_{t+\frac{1}{2}}^{(i)})\|^2] + L\eta^2\E[\|\Bar{\nabla}_t - \frac{1}{n}\sum_{i=1}^{n}\nabla f_{i}(x_{t+\frac{1}{2}}^{(i)})\|^2] 
\\
\nonumber
&+ L\eta^2\E[\| \Bar{\nabla}_t\|^2]
+\frac{\eta}{2}\E[\|\Bar{\nabla}_t - \frac{1}{n}\sum_{i=1}^{n}\nabla f_{i}(x_{t+\frac{1}{2}}^{(i)})\|^2] + \frac{L\eta^2 \sigma^2}{n} + \frac{L}{n}\sum_{i=1}^n D_{t,i}^2
\\
\label{eq:Term-C-origin}
& \leq \E[f(\Bar{X}_{t})]-\frac{\eta}{2}(1 - 2L \eta)\E[\|\Bar{\nabla}_t\|^2] + (\frac{\eta}{2}+L\eta^2)\underbrace{\E[\|\Bar{\nabla}_t - \frac{1}{n}\sum_{i=1}^{n}\nabla f_{i}(x_{t+\frac{1}{2}}^{(i)})\|^2]}_{Term(C)} + \frac{L\eta^2 \sigma^2}{n} + \frac{L}{n}\sum_{i=1}^n D_{t,i}^2
\end{align}
Term (C):
\begin{align}
    \E[\|\Bar{\nabla}_t - \frac{1}{n}\sum_{i=1}^{n}\nabla f_{i}(x_{t+\frac{1}{2}}^{(i)})\|^2] &= \E[\|\nabla f(\Bar{X}_t) - \frac{1}{n}\sum_{i=1}^{n}\nabla f_{i}(x_{t+\frac{1}{2}}^{(i)})\|^2]
    \\
    & = \E[\|\frac{1}{n}\sum_{i=1}^{n}(\nabla f_i(\Bar{X}_t) - \nabla f_{i}(x_{t+\frac{1}{2}}^{(i)}))\|^2]
    \\
    & \leq \frac{L^2}{n}\sum_{i=1}^{n}\E[\|\Bar{X}_t - x_{t+\frac{1}{2}}^{(i)}\|^2]
    \\
    & \leq \frac{L^2}{n}\E[\|\Bar{X}^t - X^{t+\frac{1}{2}}\|_{F}^2]
    \\
    & \leq \frac{L^2}{n}\E[\|\Bar{X}^t - X^{t}W - \delta^{t}W\|_{F}^2]
    \\
    & \leq \frac{2L^2}{n}\E[\|(\Bar{X}^t - X^{t})W\|_{F}^2] + \frac{2L^2}{n}\E[\|\delta^{t}W\|_{F}^2]
    \\
    & \leq \frac{2L^2(1-\rho)}{n}\E[\|\Bar{X}^t - X^{t}\|_{F}^2] + \frac{2L^2}{n}\sum_{i=1}^n D_{t,i}^2
\end{align}
Putting the value of the equation above in \cref{eq:Term-C-origin} and also denoting $\frac{1}{n}[\|\Bar{X}^t - X^{t}\|_{F}^2] = \E[(C.E)_t]$, we get
\begin{align}
\nonumber
    \E[f(\Bar{X}_{t+1})] &\leq \E[f(\Bar{X}_{t})] -\frac{\eta}{2}(1 - 2L \eta)\E[\|\Bar{\nabla}_t\|^2]
 + \frac{L\eta^2 \sigma^2}{n} + L^2[\eta(1+2 L \eta) + 2]\frac{1}{n}\sum_{i=1}^nD_{t,i}^2 
 \\
 \label{eq:consensus-origin}
 &+ L^2 \eta (1+2L \eta) (1-\rho) \E[(C.E)_t]
\end{align}
Now we start studying the consensus error at time $t+1$.
\begin{align}
    \frac{1}{n}\E[\|\Bar{X}^{t+1} - X^{t+1}\|_{F}^2] & = \frac{1}{n}\E[\|\Bar{X}^{t+\frac{1}{2}} - \eta \widetilde \nabla^t \frac{\mathds{1} \mathds{1}^{\top}}{n} - X^{t+\frac{1}{2}} + \eta \widetilde \nabla^t \|_{F}^2]
    \\
    & = \frac{1}{n}\E[\|\Bar{X}^{t} - \Bar{\delta}^{t} - \eta \widetilde \nabla^t \frac{\mathds{1} \mathds{1}^{\top}}{n} - (X^{t} + \delta^{t})W + \eta \widetilde \nabla^t \|_{F}^2]
    \\
    \nonumber
    & = \frac{1}{n}\E[\|(\Bar{X}^{t} - X^{t})W + (\Bar{\delta}^{t} - \delta^{t})W - \eta(\widetilde \nabla^t \frac{\mathds{1} \mathds{1}^{\top}}{n} - \widetilde \nabla^t)  + \eta (\nabla^{t} \frac{\mathds{1}\mathds{1}^{\top}}{n} - \nabla^{t}) 
    \\
    \label{eq:Term-Consensus-expansion}
    &- \eta (\nabla^{t} \frac{\mathds{1}\mathds{1}^{\top}}{n} - \nabla^{t}) \|_{F}^2]
\end{align}
Let, $a = ((\Bar{X}^{t} - {X}^{t})W - \eta (\nabla^{t} \frac{\mathds{1}\mathds{1}^{\top}}{n} - \nabla^{t}))$, $b = \eta((\nabla^{t} \frac{\mathds{1}\mathds{1}^{\top}}{n} - \nabla^{t})- (\widetilde \nabla^{t} \frac{\mathds{1}\mathds{1}^{\top}}{n} - \widetilde \nabla^{t}))$, and $c = ((\Bar{\delta}^{t} -\delta^{t})W)$. Using this in \cref{eq:Term-Consensus-expansion}, we have
\begin{align}
    \frac{1}{n}\E\Big[\Big\|\Bar{X}^{t+1} - {X}^{t+1}\Big\|^2_{F}\Big] \leq \frac{1}{n}\E\Big[\big\|a\big\|^2_{F} + \big\|b\big\|^2_{F} + \big\|c\big\|^2_{F} + 2 \langle a, b \rangle + 2 \langle b, c \rangle + 2 \langle c, a \rangle\Big]
\end{align}
In the result above, $\E[\langle a, b \rangle] = 0$, $\E[\langle b, c \rangle] = 0$, and $\E[\langle c, a \rangle] = 0$. 
\begin{align}
\nonumber
    \frac{1}{n}\E[\|\Bar{X}^{t+1} - X^{t+1}\|_{F}^2] & \leq \frac{1}{n}\E\Big[\big\|a\big\|^2_{F} + \big\|b\big\|^2_{F} + \big\|c\big\|^2_{F}\Big]
    \\
    \nonumber
    & \leq \frac{(1+\frac{\rho}{2})}{n}\E[\|(\Bar{X}^{t} - X^{t})W\|_{F}^2] + \frac{\eta^2(1+\frac{2}{\rho})}{n}\E[\|(\nabla^{t} \frac{\mathds{1}\mathds{1}^{\top}}{n} - \nabla^{t})\|^2_F] 
    \\
    &+ \frac{\eta^2}{n}\E[\|(\nabla^{t} \frac{\mathds{1}\mathds{1}^{\top}}{n} - \nabla^{t})- (\widetilde \nabla^{t} \frac{\mathds{1}\mathds{1}^{\top}}{n} - \widetilde \nabla^{t})\|^2_F] + \frac{1}{n}\E[\|(\Bar{\delta}^{t} -\delta^{t})W\|^2_F]
    \\
    \nonumber
    & \leq \frac{(1-\rho)(1+\frac{\rho}{2})}{n}\E[\|\Bar{X}^{t} - X^{t}\|_{F}^2] + \frac{\eta^2(1+\frac{2}{\rho})}{n}\underbrace{\E[\|(\nabla^{t} \frac{\mathds{1}\mathds{1}^{\top}}{n} - \nabla^{t})\|^2_F]}_{Term (C_1)} +  \eta^2 \sigma^2 
    \\
    \label{eq:C1-start}
    & + \frac{(1-\rho)}{n}\E[\|\Bar{\delta}^{t} -\delta^{t}\|^2_F]
\end{align}
Using $Term(C_{1})$:
\begin{align}
    \E[\|(\nabla^{t} \frac{\mathds{1}\mathds{1}^{\top}}{n} - \nabla^{t})\|^2_F] & = \sum_{i=1}^{n}\E[\|\frac{1}{n}\sum_{j=1}^{n}\nabla f_j(x_{t+\frac{1}{2}}^{(i)}) - \nabla f_i(x_{t+\frac{1}{2}}^{(i)})\|^2_F]
    \\
    & = \sum_{i=1}^{n}\E[\|\frac{1}{n}\sum_{j=1}^{n}\nabla f_j(x_{t+\frac{1}{2}}^{(i)}) - \nabla f_i(x_{t+\frac{1}{2}}^{(i)}) -\nabla f(\Bar{X}_{t+\frac{1}{2}}) + \nabla f(\Bar{X}_{t+\frac{1}{2}})\|^2_F]
    \\
    & \leq 2\sum_{i=1}^{n}\E[\|\frac{1}{n}\sum_{j=1}^{n}(\nabla f_j(x_{t+\frac{1}{2}}^{(i)}) - \nabla f_j(\Bar{X}_{t+\frac{1}{2}}))\|^2_F] + 2\sum_{i=1}^{n}\E[\| \nabla f_i(x_{t+\frac{1}{2}}^{(i)}) - \nabla f(\Bar{X}_{t+\frac{1}{2}})\|^2]
    \\
    & \leq 2L^2 \E[\|\Bar{X}^{t+\frac{1}{2}} - {X}^{t+\frac{1}{2}}\|^2_F]
    + 2\sum_{i=1}^{n}\E[\| \nabla f_i(x_{t+\frac{1}{2}}^{(i)}) - \nabla f(\Bar{X}_{t+\frac{1}{2}}) -\nabla f(x_{t+\frac{1}{2}}^{(i)}) + \nabla f(x_{t+\frac{1}{2}}^{(i)})\|^2]
    \\
    \nonumber
    & \leq 2L^2 \E[\|\Bar{X}^{t+\frac{1}{2}} - {X}^{t+\frac{1}{2}}\|^2_F] + 4\sum_{i=1}^{n}\E[\| \nabla f_i(x_{t+\frac{1}{2}}^{(i)}) -\nabla f(x_{t+\frac{1}{2}}^{(i)}) \|^2] 
    \\&+ 4\sum_{i=1}^{n}\E[\| \nabla f(\Bar{X}_{t+\frac{1}{2}}) -\nabla f(x_{t+\frac{1}{2}}^{(i)})\|^2]
    \\
    & \leq 2L^2 \E[\|\Bar{X}^{t+\frac{1}{2}} - {X}^{t+\frac{1}{2}}\|^2_F] + 4 n B^2 + 4L^2 \E[\|\Bar{X}^{t+\frac{1}{2}} - {X}^{t+\frac{1}{2}}\|^2_F]
    \\
    & \leq 6L^2 \E[\|\Bar{X}^{t+\frac{1}{2}} - {X}^{t+\frac{1}{2}}\|^2_F] + 4 n B^2
    \\
    & \leq 6L^2\E[\|\Bar{X}^{t} + \Bar{\delta}^{t} - ({X}^{t}+ {\delta}^{t})W\|^2_F] + 4nB^2
    \\
    \label{eq:Term-C-1-Solved}
    & \leq 12L^2 (1-\rho)\E[\|\Bar{X}^{t} - {X}^{t}\|^2_F] + 12L^2 (1-\rho)\E[\|\Bar{\delta}^{t} - {\delta}^{t}\|^2_F] + 4nB^2
\end{align}
Now putting $\cref{eq:Term-C-1-Solved}$ in $\cref{eq:C1-start}$, we get
\begin{multline}
\label{eq:consensus-final}
    \E[(C.E)_{t+1}] \leq \Big[(1-\rho)(1+\frac{\rho}{2}) + 12 L^2 \eta^2 (1-\rho)(1+\frac{\rho}{2})\Big]\E[(C.E)_t] + \eta^2 \sigma^2 + 4(1+\frac{2}{\rho})\eta^2 B^2
    \\
    + \Big[(1-\rho) + 12 L^2 \eta^2 (1-\rho)(1+\frac{\rho}{2})\Big]\frac{1}{n}\E[\|\Bar{\delta}^{t} - {\delta}^{t}\|^2_F]
    \\
    \leq \Big[(1-\rho)(1+\frac{\rho}{2}) + 12 L^2 \eta^2 (1-\rho)(1+\frac{\rho}{2})\Big]\E[(C.E)_t] + \eta^2 \sigma^2 + 4(1+\frac{2}{\rho})\eta^2 B^2
    \\
    + \Big[(1-\rho) + 12 L^2 \eta^2 (1-\rho)(1+\frac{\rho}{2})\Big]\frac{2}{n}\Big(\E[\|\Bar{\delta}^{t}\|^2_F] + \E[\|{\delta}^{t}\|^2_F]\Big)
     \\
    \leq \Big[(1-\rho)(1+\frac{\rho}{2}) + 12 L^2 \eta^2 (1-\rho)(1+\frac{\rho}{2})\Big]\E[(C.E)_t] + \eta^2 \sigma^2 + 4(1+\frac{2}{\rho})\eta^2 B^2
    \\
    + \Big[(1-\rho) + 12 L^2 \eta^2 (1-\rho)(1+\frac{\rho}{2})\Big]\frac{2}{n}\Big(\sum_{j=1}^{n} \E\big[\big\| \Bar{\delta}^{t} \|^2\big] + \sum_{j=1}^{n}\E\big[\big\|\delta_{t}^{j} \big\|^2\big]\Big)
    \\
    \leq \Big[(1-\rho)(1+\frac{\rho}{2}) + 12 L^2 \eta^2 (1-\rho)(1+\frac{\rho}{2})\Big]\E[(C.E)_t] + \eta^2 \sigma^2 + 4(1+\frac{2}{\rho})\eta^2 B^2
    \\
    + \Big[(1-\rho) + 12 L^2 \eta^2 (1-\rho)(1+\frac{\rho}{2})\Big]\frac{2}{n}\Big(n \E\big[\big\| \frac{1}{n}\sum_{i=1}^{n}\delta_{t}^{i} \|^2\big] + \sum_{i=1}^{n}{D}^2_{t,i}\Big)
    \\
    \leq \Big[(1-\rho)(1+\frac{\rho}{2}) + 12 L^2 \eta^2 (1-\rho)(1+\frac{\rho}{2})\Big]\E[(C.E)_t] + \eta^2 \sigma^2 + 4(1+\frac{2}{\rho})\eta^2 B^2
    \\
    + \Big[(1-\rho) + 12 L^2 \eta^2 (1-\rho)(1+\frac{\rho}{2})\Big]\frac{2}{n}\Big(\sum_{i=1}^{n}\E\big[\big\|\delta_{t}^{i} \big\|^2\big] + \sum_{i=1}^{n}{D}^2_{t,i}\Big)
    \\
    \leq \Big[(1-\rho)(1+\frac{\rho}{2}) + 12 L^2 \eta^2 (1-\rho)(1+\frac{\rho}{2})\Big]\E[(C.E)_t] + \eta^2 \sigma^2 + 4(1+\frac{2}{\rho})\eta^2 B^2
    \\
    + \Big[(1-\rho) + 12 L^2 \eta^2 (1-\rho)(1+\frac{\rho}{2})\Big]\frac{4}{n}\sum_{i=1}^{n}{D}^2_{t,i}
    \\
    \leq \Big[1 - \frac{\rho}{2} + \frac{24 L^2 \eta^2}{\rho}\Big]\E[(C.E)_t] + \eta^2 \sigma^2 + 4(1+\frac{2}{\rho})\eta^2 B^2
    + \Big[1-\rho + \frac{24 L^2 \eta^2}{\rho}\Big]\frac{4}{n}\sum_{i=1}^{n}{D}^2_{t,i}
\end{multline}
Now, let there be a potential function $\psi_{t}$, defined as,
\begin{equation}
    \label{eq:Prop2-DFL-Potential}
    \psi_{t} = \E[f(\Bar{X}_{t})] + \phi \E[(C.E)_{t}], {\text{  where $\phi$ is a constant.}}
\end{equation}
We now use the potential function to complete the proof.
\begin{flalign}
\nonumber
    \psi_{t+1} - \psi_{t} &= \big\{\E[f(\Bar{X}_{t+1})] - \E[f(\Bar{X}_{t})]\big\} + \phi \big\{\E[(C.E)_{t+1}] - \E[(C.E)_{t}]\big\}
    \\
\nonumber
    &\leq -\frac{\eta}{2}(1 - 2L \eta)\E[\|\Bar{\nabla}_t\|^2]
 + \frac{L\eta^2 \sigma^2}{n} + L^2[\eta(1+2 L \eta) + 2]\frac{1}{n}\sum_{i=1}^nD_{t,i}^2 
 + L^2 \eta (1+2L \eta)\E[(C.E)_t]
 \\ \nonumber
 &+ \phi \Big\{ \Big[1 - \frac{\rho}{2} + \frac{24 L^2 \eta^2}{\rho}\Big]\E[(C.E)_t] + \eta^2 \sigma^2 + 4(1+\frac{2}{\rho})\eta^2 B^2
    + \Big[1-\rho + \frac{24 L^2 \eta^2}{\rho}\Big]\frac{4}{n}\sum_{i=1}^{n}{D}^2_{t,i} - \E[(C.E)_t]\Big\}
    \\
    \nonumber
    &\leq -\frac{\eta}{2}(1 - 2L \eta)\E[\|\Bar{\nabla}_t\|^2] + \eta^2(\phi+\frac{L}{n})\sigma^2 + 4\phi\eta^2(1+\frac{2}{\rho})B^2
    \\
    \nonumber
    &+ \Big(L^2[\eta(1+2 L \eta) + 2] + 4\phi[1-\rho + \frac{24 L^2 \eta^2}{\rho}]\Big)\frac{1}{n}\sum_{i=1}^nD_{t,i}^2 
    \\
    \label{eq:prop-2-potential-phi-value}
    &+ \Big\{L^2 \eta (1+2L \eta) - \phi[\frac{\rho}{2} - \frac{24 L^2 \eta^2}{\rho}]\Big\}\E[(C.E)_t]
\end{flalign}
For $\phi > \frac{L^2 \eta(1+2 L \eta)(1-\rho)}{\frac{\rho}{2} - \frac{24L^2\eta^2}{\rho}}$ and 
$\eta L < \frac{\rho}{4\sqrt{3}} $:
\begin{align}
    \nonumber
    \E[\|\Bar{\nabla}_t\|^2] + \frac{2\Big\{\phi[\frac{\rho}{2} - \frac{24 L^2 \eta^2}{\rho}] - L^2 \eta (1+2L \eta)\Big\}}{\eta(1 - 2L \eta)}\E[(C.E)_t] 
    \leq \frac{2(\psi_{t} - \psi_{t+1})}{\eta(1 - 2L \eta)} + \frac{2\eta(\phi+\frac{L}{n})}{(1 - 2L \eta)}\sigma^2 
    \\
    \label{eq:prop-2-potential-phi-value-1}
    + \frac{8\phi\eta(1+\frac{2}{\rho})B^2}{(1 - 2L \eta)}
    + \frac{2\Big(L^2[\eta(1+2 L \eta) + 2] + 4\phi[1-\rho + \frac{24 L^2 \eta^2}{\rho}]\Big)}{\eta(1 - 2L \eta)}\frac{1}{n}\sum_{i=1}^nD_{t,i}^2 
\end{align}
Now let us assume that $\frac{2\Big\{\phi[\frac{\rho}{2} - \frac{24 L^2 \eta^2}{\rho}] - L^2 \eta (1+2L \eta)\Big\}}{\eta(1 - 2L \eta)}= C$ and we solve for $\phi$:
\begin{align}
    \phi = \frac{\rho(C\eta(1 - 2\eta L) + 2 L^2 \eta(1 + 2\eta L))}{\rho^2 - 48L^2 \eta^2}
\end{align}
So, for $C = L^2$, we get $\phi = \frac{\rho L^2 \eta(3+ 2\eta L)}{\rho^2 - 48L^2 \eta^2}$. Now, summing \cref{eq:prop-2-potential-phi-value-1} for $t=\{1, \ldots ,T\}$ and dividing it by $T$, we get
\begin{align}
    \nonumber
    \frac{1}{T}\sum_{t=1}^T\E[\|\Bar{\nabla}_t\|^2] + \frac{2\Big\{\phi[\frac{\rho}{2} - \frac{24 L^2 \eta^2}{\rho}] - L^2 \eta (1+2L \eta)\Big\}}{\eta(1 - 2L \eta)}\frac{1}{T}\sum_{t=1}^T\E[(C.E)_t] 
    \leq \frac{1}{T}\sum_{t=1}^T\frac{2(\psi_{t} - \psi_{t+1})}{\eta(1 - 2L \eta)} + \frac{2\eta(\phi+\frac{L}{n})}{(1 - 2L \eta)}\sigma^2 
    \\
    \label{eq:prop-2-potential-telescope}
    + \frac{8\phi\eta(1+\frac{2}{\rho})B^2}{(1 - 2L \eta)}
    + \frac{2\Big(L^2[\eta(1+2 L \eta) + 2] + 4\phi[1-\rho + \frac{24 L^2 \eta^2}{\rho}]\Big)}{\eta(1 - 2L \eta)}\frac{1}{nT}\sum_{t=1}^T\sum_{i=1}^nD_{t,i}^2 
\end{align}
Also telescoping over \cref{eq:Prop2-DFL-Potential} for $t=\{1, \ldots ,T\}$ and dividing it by $T$, we get
\begin{equation}
    \label{eq:Prop-2-potential-start-telescope}
    \frac{\psi_{T+1} - \psi_{1}}{T} \geq \frac{f^{*} - f(\Bar{X}_1) - \phi(C.E)_{1}}{T}
\end{equation}
Putting \cref{eq:Prop-2-potential-start-telescope} in \cref{eq:prop-2-potential-telescope} and restructuring it we will get,
\begin{align}
\nonumber
    \frac{1}{T}\sum_{t=1}^T\E[\|\nabla f(\Bar{X}_t)\|^2] +
         \frac{L^2}{T}\sum_{t=1}^T\E[(C.E)_t] 
        \leq \frac{2(f(\Bar{X}_1) -f^{*} + \phi(C.E)_{1})}{\eta(1 - 2\eta L)T} + \frac{2\eta(\phi+\frac{L}{n})}{(1 - 2\eta L)}\sigma^2 
        + \frac{8\phi\eta(1+\frac{2}{\rho})}{(1 - 2\eta L)}B^2
        \\
        + \frac{2\Big(L^2[\eta(1+2 L \eta) + 2] + 4\phi[1-\rho + \frac{24 L^2 \eta^2}{\rho}]\Big)}{\eta(1 - 2\eta L)}\frac{1}{nT}\sum_{t=1}^T \sum_{i=1}^n D_{t,i}^2 
\end{align}
\clearpage
\section{Proof for FedNDL3}
Start:-

\begin{flalign}
\label{eq:Proposed-3-start}
\Bar{X}_{t+1} = \Bar{X}_{t} - \frac{\eta}{n}\sum_{i=1}^{n}(\widetilde \nabla_{t,i} + \delta_{t,i})
\end{flalign}
Now, using L-smoothness, we get
\begin{flalign}
\label{eq:L-smooth-start}
    f(\Bar{X}_{t+1}) & \leq f(\Bar{X}_{t}) + \langle \nabla f(\Bar{X}_{t}),\Bar{X}_{t+1} - \Bar{X}_{t}\rangle + \frac{L}{2} \|\Bar{X}_{t+1} - \Bar{X}_{t}\|^2
    \\
    & \leq f(\Bar{X}_{t}) \underbrace{- \eta \langle \nabla f(\Bar{X}_{t}),\frac{1}{n}\sum_{i=1}^{n}\widetilde \nabla_{t,i}\rangle}_{Term(B)} \underbrace{- \eta \langle \nabla f(\Bar{X}_{t}),\frac{1}{n}\sum_{i=1}^{n}\delta_{t,i}\rangle}_{Term(C)} + \underbrace{\frac{L \eta^2}{2} \|\frac{1}{n}\sum_{i=1}^{n}(\widetilde \nabla_{t,i} + \delta_{t,i})\|^2}_{Term(A)}
\end{flalign}
Term (C):
\\
Taking the expectation of term (C) w.r.t noise, we get
\begin{flalign}
\label{eq:proposed3-C}
    \E[C] = 0
\end{flalign}
Term (B):
\\
Taking the expectation of term (B) w.r.t data, we get
\begin{flalign}
    \E[B] = -\eta \E[\langle \nabla f(\Bar{X}_{t}),\frac{1}{n}\sum_{i=1}^{n}\nabla_{t,i}\rangle]
\end{flalign}
Using the formulation, $\langle a_1, a_2\rangle = \frac{1}{2}\Big(\|a_1\|^2 + \|a_2\|^2 - \|a_1 - a_2\|^2\Big)$
\begin{flalign}
    \E[B] = -\frac{\eta}{2}\E[\|\Bar{\nabla}_t\|^2] - \frac{\eta}{2}\E[\|\frac{1}{n}\sum_{i=1}^{n}\nabla_{t,i}\|^2] + \frac{\eta}{2}\E[\|\Bar{\nabla}_t - \frac{1}{n}\sum_{i=1}^{n}\nabla_{t,i}\|^2]
\end{flalign}
Term (A):
\begin{flalign}
\nonumber
    A &= \frac{L \eta^2}{2} \|\frac{1}{n}\sum_{i=1}^{n}(\widetilde \nabla_{t,i} + \delta_{t,i})\|^2
    \\
    &= \frac{L \eta^2}{2}\Big[ \|\frac{1}{n}\sum_{i=1}^{n}\widetilde \nabla_{t,i}\|^2 +  \|\frac{1}{n}\sum_{i=1}^{n}\delta_{t,i}\|^2 + 2\langle\frac{1}{n}\sum_{i=1}^{n}\widetilde \nabla_{t,i}, \frac{1}{n}\sum_{i=1}^{n}\delta_{t,i}\rangle\Big]
\end{flalign}
Taking the expectation of term (A) w.r.t noise, we get
\begin{flalign}
    \E[A] \leq \frac{L \eta^2}{2}\Big[ \|\frac{1}{n}\sum_{i=1}^{n}\widetilde \nabla_{t,i}\|^2 +  \frac{1}{n}\sum_{i=1}^{n}D^2_{t,i}\Big]
\end{flalign}
Now in the term above taking the expectation w.r.t data, we get
\begin{flalign}
    \E[A] &\leq \frac{L \eta^2}{2}\Big[ \E[\|\frac{1}{n}\sum_{i=1}^{n}\widetilde \nabla_{t,i}\|^2] +  \frac{1}{n}\sum_{i=1}^{n}D^2_{t,i}\Big]
    \\
    & \leq \frac{L \eta^2}{2}\E[\|\frac{1}{n}\sum_{i=1}^{n}\widetilde \nabla_{t,i} - \nabla_{t,i} + \nabla_{t,i}\|^2] + \frac{L \eta^2}{2}\frac{1}{n}\sum_{i=1}^{n}D^2_{t,i}
    \\
    & \leq \frac{L \eta^2}{2}\E\Big[\|\frac{1}{n}\sum_{i=1}^{n}\widetilde \nabla_{t,i} - \nabla_{t,i} \|^2 + \|\frac{1}{n}\sum_{i=1}^{n}\nabla_{t,i}\|^2 + 2\langle\frac{1}{n}\sum_{i=1}^{n}\widetilde \nabla_{t,i} - \nabla_{t,i},\frac{1}{n}\sum_{i=1}^{n}\nabla_{t,i}\rangle\Big] + \frac{L \eta^2}{2}\frac{1}{n}\sum_{i=1}^{n}D^2_{t,i}
    \\
    & \leq \frac{L \eta^2}{2}\Big(\frac{\sigma^2}{n} + \E[\|\frac{1}{n}\sum_{i=1}^{n}\nabla_{t,i}\|^2]\Big) + \frac{L \eta^2}{2}\frac{1}{n}\sum_{i=1}^{n}D^2_{t,i}
\end{flalign}
Now putting all the A, B, and C terms in \cref{eq:L-smooth-start}, we get
\begin{flalign}
\nonumber
    \E[f(\Bar{X}_{t+1})] &\leq \E[f(\Bar{X}_{t})] - \frac{\eta}{2}\E[\|\Bar{\nabla}_t\|^2]  \underbrace{- \frac{\eta}{2}\E[\|\frac{1}{n}\sum_{i=1}^{n}\nabla_{t,i}\|^2]}_{Term (M)} + \frac{\eta}{2}\E[\|\Bar{\nabla}_t - \frac{1}{n}\sum_{i=1}^{n}\nabla_{t,i}\|^2] + \frac{L \eta^2}{2}\frac{\sigma^2}{n} 
    \\
    \label{eq:A+B+C}
    &+ \underbrace{\frac{L \eta^2}{2}\E[\|\frac{1}{n}\sum_{i=1}^{n}\nabla_{t,i}\|^2]}_{Term (N)} + \frac{L \eta^2}{2}\frac{1}{n}\sum_{i=1}^{n}D^2_{t,i}
\end{flalign}
Dropping Term (M) and then working with Term (N).
\begin{flalign}
    N & = \frac{L \eta^2}{2}\E[\|\frac{1}{n}\sum_{i=1}^{n}\nabla_{t,i}\|^2]
    \\ & = \frac{L \eta^2}{2}\E[\|\frac{1}{n}\sum_{i=1}^{n}\nabla_{t,i} - \Bar{\nabla}_t + \Bar{\nabla}_t\|^2]
    \\
    & \leq L \eta^2 \E[\|\frac{1}{n}\sum_{i=1}^{n}\nabla_{t,i} - \Bar{\nabla}_t \|^2] + L \eta^2\E[\|\Bar{\nabla}_t\|^2]
\end{flalign}
Now putting back the M and N terms in \cref{eq:A+B+C}, we get
\begin{flalign}
\nonumber
    \E[f(\Bar{X}_{t+1})] &\leq \E[f(\Bar{X}_{t})] - \frac{\eta}{2}\E[\|\Bar{\nabla}_t\|^2] + \frac{\eta}{2}\E[\|\Bar{\nabla}_t - \frac{1}{n}\sum_{i=1}^{n}\nabla_{t,i}\|^2] + \frac{L \eta^2}{2}\frac{\sigma^2}{n} + L \eta^2 \E[\|\frac{1}{n}\sum_{i=1}^{n}\nabla_{t,i} - \Bar{\nabla}_t \|^2] 
    \\
    &+ L \eta^2\E[\|\Bar{\nabla}_t\|^2] + \frac{L \eta^2}{2}\frac{1}{n}\sum_{i=1}^{n}D^2_{t,i}
    \\ \label{eq:Term-O-start}
    & \leq \E[f(\Bar{X}_{t})] - \frac{\eta}{2}(1 - 2L \eta)\E[\|\Bar{\nabla}_t\|^2] + \frac{L \eta^2}{2}\frac{\sigma^2}{n} 
    + \frac{\eta}{2}(1 + 2 L \eta)\underbrace{\E[\|\frac{1}{n}\sum_{i=1}^{n}\nabla_{t,i} - \Bar{\nabla}_t \|^2]}_{Term (O)} 
    + \frac{L \eta^2}{2}\frac{1}{n}\sum_{i=1}^{n}D^2_{t,i}
\end{flalign}
Using Term (O):
\begin{flalign}
    O & = \E[\|\frac{1}{n}\sum_{i=1}^{n}\nabla_{t,i} - \Bar{\nabla}_t \|^2]
    \\
    & \leq \frac{L^2}{n}\sum_{i=1}^{n}\E[\|X_{t,i} - \Bar{X}_t\|^2]
    \\
    \label{eq:Term-O-end}
    & \leq \frac{L^2}{n}\E[\|X^t - \Bar{X}^t\|^2_F]
\end{flalign}
Putting \cref{eq:Term-O-end} in \cref{eq:Term-O-start} and also denoting $\frac{1}{n}\|\Bar{X}^t - X^{t}\|_{F}^2 = (C.E)_t$, we get
\begin{flalign}
\label{eq:L-smooth-final}
    \E[f(\Bar{X}_{t+1})] &\leq \E[f(\Bar{X}_{t})] - \frac{\eta}{2}(1 - 2L \eta)\E[\|\Bar{\nabla}_t\|^2] + \frac{L \eta^2}{2}\frac{\sigma^2}{n} + \frac{L \eta^2}{2}\frac{1}{n}\sum_{i=1}^{n}D^2_{t,i} 
    + \frac{L^2\eta}{2}(1+2L\eta)\E[(C.E)_t]
\end{flalign}
Using the assumption on consensus error at time t+1,
\begin{flalign}
    \frac{1}{n}\E[\|X^{t+1} - \Bar{X}^{t+1}\|^2_F] &\leq \rho_t \frac{1}{n}\E[\|X^t - \Bar{X}^t\|^2_F] + \gamma_t
    \\
    \E[(C.E)_{t+1}] &\leq \rho_t \E[(C.E)_{t}]] + \gamma_t
\end{flalign}
Let there be a potential function $\psi_{t}$, defined as,
\begin{equation}
    \label{eq:DFL-Potential}
    \psi_{t} = \E[f(\Bar{X}_{t})] + \phi_t \E[(C.E)_{t}], {\text{  where $\phi_t > 0$.}}
\end{equation}
Now, we use the potential function to complete the proof.
\begin{multline}
    \psi_{t+1} - \psi_{t} = \big\{\E[f(\Bar{X}_{t+1})] - \E[f(\Bar{X}_{t})]\big\} + \phi_{t+1} \E[(C.E)_{t+1}] - \phi_{t}\E[(C.E)_{t}]
    \\
    \leq - \frac{\eta}{2}(1 - 2L \eta)\E[\|\Bar{\nabla}_t\|^2] + \frac{L \eta^2}{2}\frac{\sigma^2}{n} + \frac{L \eta^2}{2}\frac{1}{n}\sum_{i=1}^{n}D^2_{t,i} + \frac{L^2\eta}{2}(1+2L\eta)\E[(C.E)_t] 
    + \phi_{t+1} \rho_t \E[(C.E)_{t}] + \phi_{t+1} \gamma_t - \phi_{t} \E[(C.E)_{t}]
    \\
    \label{eq:potential-start}
    \leq - \frac{\eta}{2}(1 - 2L \eta)\E[\|\Bar{\nabla}_t\|^2] + \frac{L \eta^2}{2}\frac{\sigma^2}{n} + \frac{L \eta^2}{2}\frac{1}{n}\sum_{i=1}^{n}D^2_{t,i} + \phi_{t+1} \gamma_t + \Big(\frac{L^2\eta}{2}(1+2L\eta)+ \phi_{t+1} \rho_t - \phi_{t} \Big) \E[(C.E)_{t}]
\end{multline}
Pick $\phi_{t}$ such that
\begin{flalign}
\nonumber
    \phi_{t} & > \frac{L^2\eta}{2}(1+2L\eta)+ \phi_{t+1} \rho_t
\end{flalign}
So, let
Using $\phi_{t}  = L^2\eta(1+2L\eta)+ 2\phi_{t+1} \rho_t$ and $\eta L < \frac{1}{2}$ in \cref{eq:potential-start}, we get
\begin{flalign}
\nonumber
  \E[\|\Bar{\nabla}_t\|^2] +\frac{\Big(L^2\eta(1+2L\eta)+ 2\phi_{t+1} \rho_t\Big)}{\eta(1 - 2L \eta)} \E[(C.E)_{t}]\leq \frac{2(\psi_{t} - \psi_{t+1})}{\eta(1 - 2L \eta)} + \frac{L}{(1 - 2L \eta)}\frac{\sigma^2}{n} 
   \\
   \label{eq:potential-phi-value}
   + \frac{L \eta}{(1 - 2L \eta)}\frac{1}{n}\sum_{i=1}^{n}D^2_{t,i} + \frac{2\phi_{t+1}\gamma_t}{\eta(1 - 2L \eta)} 
\end{flalign}
Let $C = \frac{L^2\eta(1+2L\eta)+ 2\phi_{t+1} \rho_t}{\eta(1 - 2L \eta)} $. Solving for $\phi_{t+1}$:
\begin{align}
    \phi_{t+1} = \frac{C\eta(1 - 2\eta L) - L^2 \eta(1+2\eta L)}{2\rho_t}
\end{align}
So, for $C = 2L^2$ and $\eta L < \frac{1}{6}$, we have $\phi_{t+1} = \frac{L^2\eta(1 - 6 \eta L)}{2\rho_t}$. Now, summing \cref{eq:potential-phi-value} for $t=\{1, \ldots, T\}$ and dividing it by $T$, we get
\begin{flalign}
\nonumber
  \frac{1}{T}\sum_{t=1}^T\E[\|\Bar{\nabla}_t\|^2] +\frac{1}{T}\sum_{t=1}^T\frac{\Big(L^2\eta(1+2L\eta)+ 2\phi_{t+1} \rho_t\Big)}{\eta(1 - 2L \eta)} \E[(C.E)_{t}]\leq \frac{2\frac{1}{T}\sum_{t=1}^T(\psi_{t} - \psi_{t+1})}{\eta(1 - 2L \eta)} + \frac{L}{(1 - 2L \eta)}\frac{\sigma^2}{n} 
   \\
   \label{eq:potential-end}
   + \frac{2\frac{1}{T}\sum_{t=1}^T\phi_{t+1}\gamma_t}{\eta(1 - 2L \eta)}+ \frac{L \eta}{(1 - 2L \eta)}\frac{1}{nT}\sum_{t=1}^T\sum_{i=1}^{n}D^2_{t,i}  
\end{flalign}
Also telescoping over \cref{eq:DFL-Potential} for $t=\{1 \cdots T\}$ and dividing it by $T$, we get
\begin{equation}
    \label{eq:proposed3-potential-start-telescope}
    \frac{\psi_{T+1} - \psi_{1}}{T} \geq \frac{f^{*} - f(\Bar{X}_1) - \phi_1(C.E)_{1}}{T}
\end{equation}
Putting \cref{eq:proposed3-potential-start-telescope} in \cref{eq:potential-end} and restructuring it we will get,
\begin{multline}
        \frac{1}{T}\sum_{t=1}^{T}\E[\|\nabla f(\Bar{X}_t)\|^2] + \frac{2L^2}{T}\sum_{t=1}^{T}\E[(C.E)_t] 
        \leq \frac{2(f(\Bar{X}_1) - f^{*} + \phi_1(C.E)_{1})}{\eta (1 - 2L \eta) T} + \frac{L \eta}{n(1 - 2 L \eta)}\sigma^2 \\ + \frac{L^2(1-6L\eta)}{T(1-2L \eta)}\sum_{t=1}^{T}\frac{\gamma_t}{\rho_{t}}
     + \frac{L \eta}{(1-2 L \eta)}\frac{1}{nT}\sum_{t=1}^{T}\sum_{i=1}^{n}{D}^2_{t,i}
    \end{multline}
\end{document}